\DeclareMathOperator*{\argmin}{arg\,min}
\theoremstyle{plain}
\newtheorem{theorem}{Theorem}[section]
\newtheorem{proposition}{Proposition}
\newtheorem{lemma}{Lemma}
\theoremstyle{definition}
\newtheorem{definition}[theorem]{Definition}
\theoremstyle{remark}
\newcommand{\bzero}{\mathbf{0}}
\newcommand{\bb}{\mathbf{b}}
\newcommand{\bx}{\mathbf{x}}
\newcommand{\bbh}{\hat{\mathbf{b}}}
\newcommand{\alice}[1]{\prescript{\mathcal{A}}{}{#1}}
\newcommand{\bob}[1]{\prescript{\mathcal{B}}{}{#1}}
\newcommand{\cover}[1]{\prescript{\mathcal{C}}{}{#1}}
\definecolor{iccvblue}{rgb}{0.21,0.49,0.74}
\title{PSyDUCK: Training-Free Steganography for Latent Diffusion}
\author{Aqib Mahfuz$^*$\\
University of Oxford\\
{\tt\small aqib.mahfuz@gmail.com}
\and
Georgia Channing$^*$\\
University of Oxford\\
{\tt\small cgeorgia@robots.ox.ac.uk}
\and
Mark van der Wilk\\
University of Oxford\\
{\tt\small mark.vdwilk@cs.ox.ac.uk}
\and
Philip Torr\\
University of Oxford\\
{\tt\small philip.torr@eng.ox.ac.uk}
\and
Fabio Pizzati\\
University of Oxford\\
{\tt\small fabio.pizzati@eng.ox.ac.uk}
\and
Christian Schroeder de Witt\\
University of Oxford\\
{\tt\small cs@robots.ox.ac.uk}
}
\begin{document}
\maketitle

\begin{abstract}

Recent advances in generative AI have opened promising avenues for steganography, which can securely protect sensitive information for individuals operating in hostile environments, such as journalists, activists, and whistleblowers. However, existing methods for generative steganography have significant limitations, particularly in scalability and their dependence on retraining diffusion models. We introduce PSyDUCK, a training-free, model-agnostic steganography framework specifically designed for latent diffusion models. PSyDUCK leverages controlled divergence and local mixing within the latent denoising process, enabling high-capacity, secure message embedding without compromising visual fidelity. Our method dynamically adapts embedding strength to balance accuracy and detectability, significantly improving upon existing pixel-space approaches. Crucially, PSyDUCK extends generative steganography to latent-space video diffusion models, surpassing previous methods in both encoding capacity and robustness. Extensive experiments demonstrate PSyDUCK’s superiority over state-of-the-art techniques, achieving higher transmission accuracy and lower detectability rates across diverse image and video datasets. By overcoming the key challenges associated with latent diffusion model architectures, PSyDUCK sets a new standard for generative steganography, paving the way for scalable, real-world steganographic applications.
\end{abstract}

\section{Introduction}
\label{introduction}

\begin{figure}[t]
    \centering
    \includegraphics[width=\linewidth]{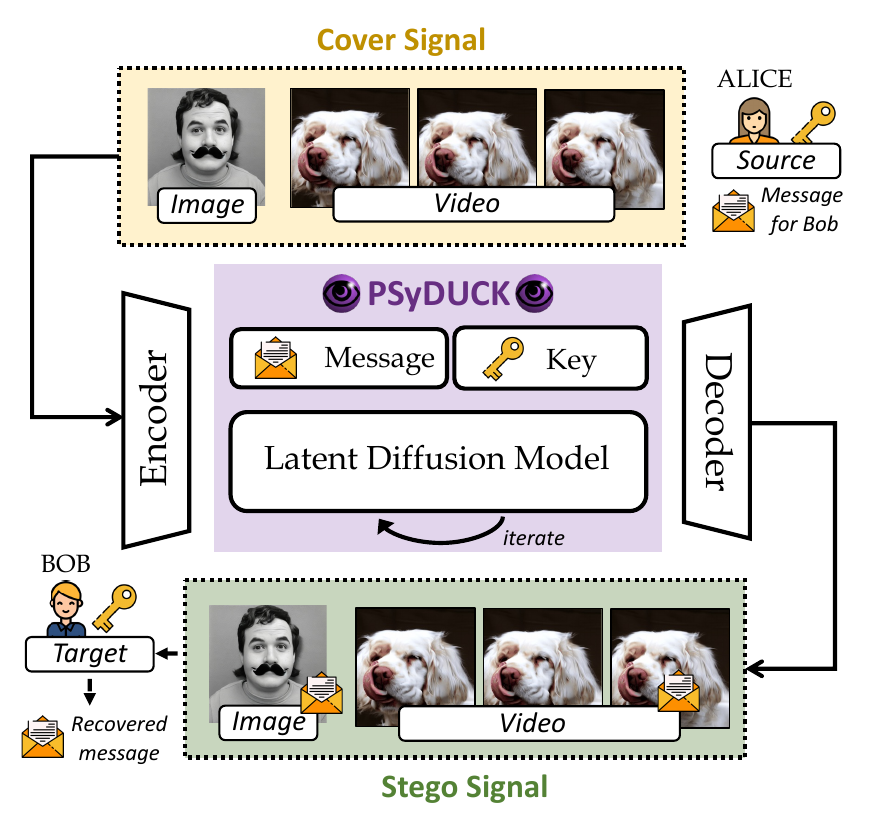}
    \caption{\textbf{General PSyDUCK scheme.} Alice wants to send a secret message to Bob while preventing interception by malicious actors. To achieve this, PSyDUCK embeds a steganographic message of arbitrary length into a \textit{cover signal}--an image or video--using shared keys and pre-trained latent diffusion models. The resulting \textit{stego-signal} can be freely shared on the open web, allowing Bob, who possesses the correct key, to decode and retrieve the original message.}
    \label{fig:teaser}
\end{figure}

Steganography, the practice of hiding information within ordinary media, has long protected sensitive communication from surveillance. In today's digital world, this typically involves embedding secret data into innocuous digital content to enable covert exchanges. The characteristics of the medium significantly affect steganographic effectiveness, and the rise of AI-driven generative models, particularly diffusion models, presents promising opportunities. These models offer scalable mappings from textual prompts to diverse visual outputs, including images and videos, providing unique channels for steganographic embedding.

Recent steganographic methods leverage diffusion models by subtly perturbing pixel spaces to encode data~\citep{kim2024diffusionstego}. However, such pixel-space methods struggle to adapt to modern diffusion architectures, which increasingly operate in latent spaces. State-of-the-art latent diffusion models enhance generative quality but introduce additional decoding steps, complicating controlled embedding of hidden data. 

To address this challenge, we propose PSyDUCK, a novel, model-agnostic steganographic framework tailored explicitly for latent diffusion models. PSyDUCK embeds secret data through controlled divergence and local mixing during latent denoising, effectively overcoming the constraints imposed by latent-to-pixel transformations. Extensive evaluations show PSyDUCK's superior performance over existing methods, delivering robust, high-capacity steganographic solutions suitable for latent diffusion model deployments.

PSyDUCK assumes that sender and receiver share access to a latent diffusion-based generative system—a realistic scenario due to the widespread availability of open-source models—and possess common keys. These keys enable controlled modifications within the diffusion model's denoising trajectories, embedding messages securely and ensuring accurate reconstruction. Importantly, PSyDUCK can adapt embedding strength, optimizing the trade-off between message fidelity and detection resistance. This approach is robust against detection and tailored for latent diffusion models, addressing the fundamental challenges posed by latent-to-pixel space mappings. An overview of our method is illustrated in Figure~\ref{fig:teaser}.

Crucially, PSyDUCK is the first technique to extend steganographic encoding to latent video diffusion models. While previous diffusion-based steganography approaches have been limited to pixel-space image generation, PSyDUCK significantly advances this domain by supporting high-capacity, secure embedding within video content.

The primary contributions of this paper are as follows: \begin{itemize}
    \item We design a training-free steganographic method that is compatible with a wide range of latent diffusion models, enabling practical applications across diverse datasets and modalities.
    \item By analyzing the error properties of PSyDUCK, we demonstrate its ability to mitigate common vulnerabilities in steganography, such as detectability and decoding errors.
    \item We perform extensive experiments with both image and video data, comparing PSyDUCK against existing baselines to showcase its superior scalability and security.
\end{itemize}
Our results indicate that PSyDUCK is not only competitive with state-of-the-art methods in terms of accuracy and robustness but also establishes a new paradigm for steganography by achieving high channel capacity without retraining. This work provides a foundation for future research into general-purpose steganographic systems that leverage the growing power of generative diffusion models.

\section{Related Work}
\label{related_work}

Traditional image steganography is predicated on the embedding of small, imperceptible perturbations into a chosen cover image.
Spatial methods such as least significant bit encoding \citep{schyndel1994lsb, wolfgang1996lsb}, adaptive methods such as HUGO, WOW, and S-UNIWARD \citep{pevny2010hugo, holub2012wow, holub2014suniward}, and transform-based methods like J-Steg and DCT \citep{provos2003jsteg, hetzl2005dct} offer varying levels of message capacity and are all known to be susceptible to detection via steganalysis \citep{denemark2016steganalysis, holub2015jpegsteganalysis, boroumandSteganalysis2019}.
Deep methods \citep{baluja2017deepstego, bui2023rosteals, cho2021stego, jing2021hinet, lu2021isn, xu2022riis, zhang2019steganoganhighcapacityimage, zhu2018hiddenhidingdatadeep, kumar2024videostegonographychaoticdynamicsystems} improve upon these techniques by learning to encode information into a cover image without greatly impacting visual quality while maintaining high decoding accuracy. 
These protocols can achieve much higher message capacities than their predecessors, but they require retraining and their use is also reliably detectable via steganalysis.
As these deep methods require retraining, they also implicitly require that the sender and receiver have private access to the trained models.

Alternatively, generative steganography has made way for coverless schemes, which eschew the need for shared cover images to secretly embed information.
\citet{kaptchuk2023pulsar} introduces Pulsar, which encodes and decodes messages using shared private keys as pseudorandom number generator (pRNG) seeds to guide stochastic Pixel diffusion models toward a desired sample. 
Pulsar suffers from a high error rate ($23\%$) in base form and thusly mitigates this with the use of customized error correction protocols. 
Its steganographic security is founded on the indistinguishability of pRNG outputs, but the method was found to be limited to pixel-based models and not be able to scale to latent-based models, partially due to their error correction protocols.
StegaDDPM \citep{peng2023stegaddpm} modifies the penultimate DDPM denoising step to use the bits of a message to create additional stochastic noise via a CDF transform. 
StegaDDPM enables higher bits-per-pixel (BPP) message capacity than Pulsar, but is highly sensitive to image degradation. 
Moreover, while the modified stochasticity mimics Gaussian distributions, biased message distributions compromise practical security.
DiffusionStego \citep{kim2024diffusionstego} introduced various message projection techniques to similarly embed the bits of a message into the added noise of the penultimate denoising step of pixel-based diffusion models. 
This achieves flexibility with multiple projection functions but struggles with high BPP, leading to sensitivity to noise and extraction errors. While the noise distributions match Gaussian statistics for uniform messages, the protocol is vulnerable if an adversary knows the projection functions.
Pulsar, StegaDDPM, and DiffusionStego all share similar key-based designs and offer some level of provable security. 
However, each of these methods are constrained to pixel-based models and show poor adaptability to latent diffusion models. Additionally, their channel capacities are far lower than those of deep methods, leaving significant room for improvement.

Among other diffusion-based generative steganography techniques is CRoSS \citep{yu2023cross}, which utilizes a deterministic, invertible diffusion process to hide entire images (as opposed to precise messages) into synthetic containers.
\citet{wei2023generativesteganographydiffusion} proposes retraining a diffusion model for the express purpose of steganography.
In the video space, \citet{mao2024videostego} proposes training an encoder/decoder network to hide messages into the latent space of video diffusion models.

Parallel to our study, proposals for autoregressive generative steganography have shown promise for secure communication. 
Stegasaurus \citep{ziegler2019neurallinguisticsteganography} uses arithmetic coding to map uniformly sampled text to the output of a large language model using the secret message to guide token selection.
Meteor \citep{kaptchuk2021meteor} continually XORs fresh pseudorandom masks with the secret message, using the result to sample tokens from an autoregressive generator.
These two approaches are provably secure up to the ability of an adversary to tell between real-or-random sequences \citep{kaptchuk2021meteor}.
Discop \citep{ding2023Discop} proposes the use of ``distribution copies," which rotates the probability intervals for token sampling while maintaining the same overall distribution.
\citet{dewitt2023perfectlysecuresteganography} identifies a correspondence between security, maximum encoding efficiency, and minimum entropy couplings for autoregressive models.
Both of these methods are \textit{perfectly secure}, in that the distributions of the unperturbed covertext and the stegotext are the same. While these proposals are all secure, they are disadvantaged with extremely low channel capacity (i.e. $< 7$ bits per token).


\section{Framework}
\label{framework}
Here, we introduce the PSyDUCK framework. To extend steganography to \textit{latent}-based generative diffusion models, we propose a novel framework that enables precise control over the strength of an encoded message signal while maintaining imperceptibility. By dynamically adjusting the embedding process, PSyDUCK ensures that the encoded message remains robust against the distortions introduced by latent encoding and decoding schemes inherent to diffusion models, thereby improving reliability without increasing detectability. 

\subsection{Preliminaries}

Generative diffusion models, inspired by non-equilibrium thermodynamics, have emerged as a powerful framework for generating complex data distributions from simple noise~\citep{sohldickstein2015deepunsupervisedlearningusing}. These models operate through a probabilistic process consisting of a forward diffusion and a reverse denoising process.

The forward diffusion process incrementally adds Gaussian noise to an initial data point $\mathbf{x}_0$ over $T$ steps, producing a sequence $\mathbf{x}_1, \mathbf{x}_2, \ldots, \mathbf{x}_T$. Each step is modeled as:
\begin{equation}
    q(\mathbf{x}_t | \mathbf{x}_{t-1}) = \mathcal{N}(\mathbf{x}_t; \sqrt{1 - \beta_t} \mathbf{x}_{t-1}, \beta_t \mathbf{I}),
\end{equation}
where $\beta_t$ is the variance schedule that controls the noise level at step $t$. By the final step, $\mathbf{x}_T$ approximates a standard Gaussian distribution~\citep{sohldickstein2015deepunsupervisedlearningusing}.


The reverse denoising process learns to transform noisy data back into the original data distribution by iteratively removing the added noise. A neural network is trained to predict the noise component at each step, enabling the recovery of $\mathbf{x}_{t-1}$ from $\mathbf{x}_t$. The reverse process at each step can be modeled as:
\begin{equation}
    p(\mathbf{x}_{t-1} | \mathbf{x}_t) = \mathcal{N}(\mathbf{x}_{t-1}; \mu_\theta(\mathbf{x}_t, t), \sigma_\theta^2(t) \mathbf{I}),
\end{equation}
where $\mu_\theta(\mathbf{x}_t, t)$ represents the predicted mean and $\sigma_\theta^2(t)$ is the predicted variance, both parameterized by the neural network. During this process, the noise $\epsilon_t$ is sampled from the Gaussian distribution defined by $\mu_\theta(\mathbf{x}_t, t)$ and $\sigma_\theta^2(t)$ as:
\begin{equation}
    \epsilon_t \sim \mathcal{N}(0, \sigma_\theta^2(t)).
\end{equation}
The sampled noise is then used to adjust the latent variable in the denoising step.

Conditional diffusion models extend the generative framework by incorporating additional information (e.g., class labels or textual descriptions) to guide the generation process~\citep{dhariwal2021diffusionmodelsbeatgans}. 
Latent diffusion models differ from pixel-based models by operating in a compressed latent space rather than directly on high-dimensional pixel data. Latent conditional diffusion models dramatically improve computational efficiency by operating in a compressed latent space rather than the high-dimensional data space~\citep{rombach2022highresolutionimagesynthesislatent}. The data $\mathbf{x}$ is first mapped to a latent representation $\mathbf{z}$ using an encoder. The diffusion process is then applied within the latent space. After completing the denoising process, a decoder reconstructs the data from the denoised latent representation.

\subsection{PSyDUCK Algorithm}

\begin{figure}[tp]
    \centering
    \includegraphics[width=0.48\textwidth]{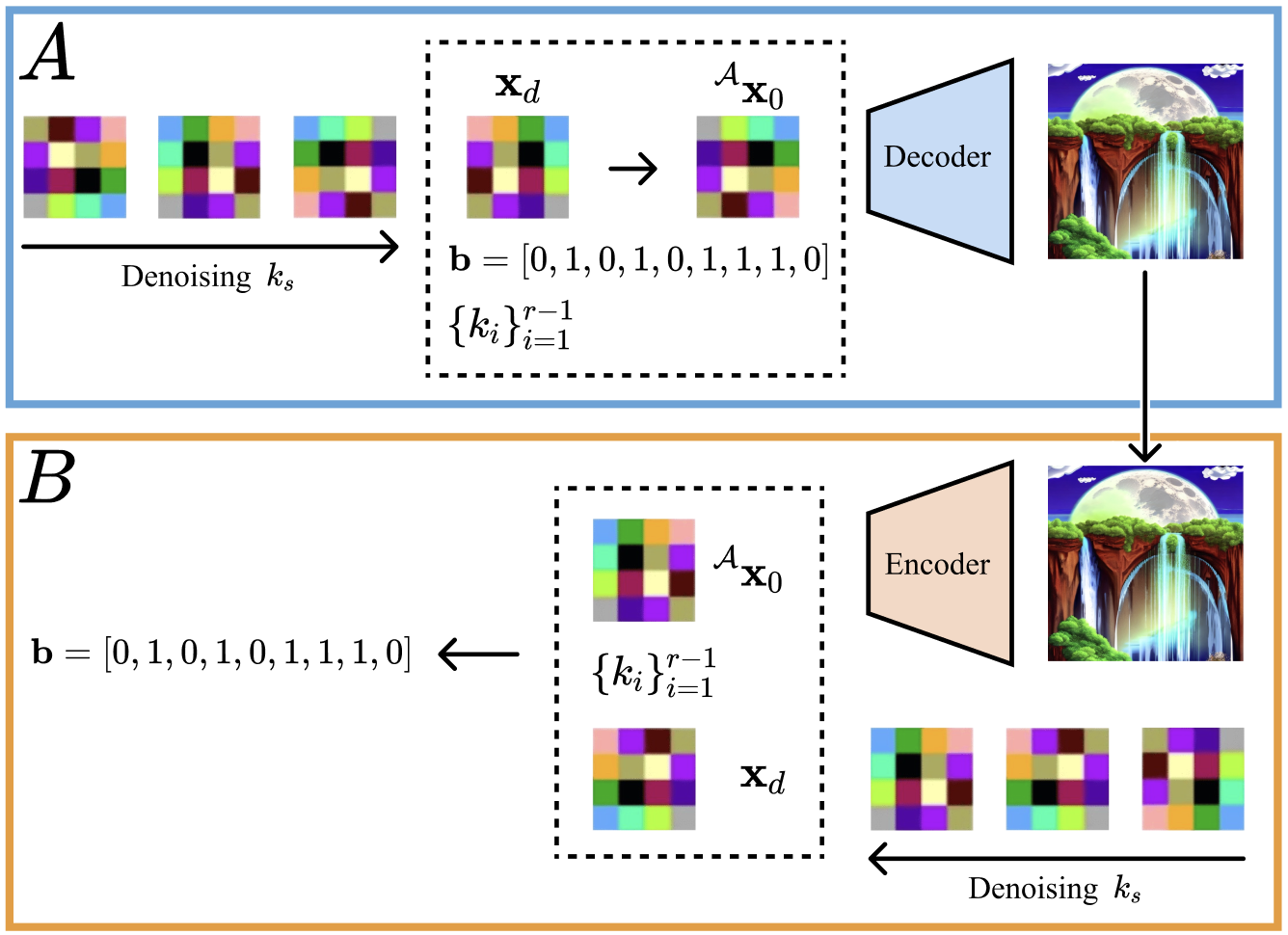}
    \caption{\textbf{Encoding and Decoding.} \small An illustration of the PSyDUCK encoding and decoding processes on latent model architectures. Dashed boxes denote custom PSyDUCK operations. To encode secret bitstring $\mathbf{b}$, Alice first denoises in the latent space until timestep $d$ with synchronization key $k_s$. Then, she diverges for $d$ steps using reference keys $k_i$ and subsequently mixes the diverged samples using $\mathbf{b}$. Finally, she puts her sample through the decoder to transmit a final output. To extract $\mathbf{b}$, Bob first encodes the transmission from Alice back into the latent space. He similarly denoises in the latent space until timestep $d$ with synchronization key $k_s$. Then, he diverges for $d$ steps using the reference keys $\{k_i\}_{i=0}^{r-1}$. Bob finally decodes Alice's message $\mathbf{b}$ by comparing his reference samples to Alice's transmission.}
    \label{fig:encoding_framework}
\end{figure}

In this section, we outline a new method, PSyDUCK, by which we extend training-free steganography to latent diffusion models. Alice and Bob wish to communicate steganographically using a diffusion model. They agree in advance on a synchronization key $k_s$ and $r$ reference keys \(\{k_i\}_{i=0}^{r-1}\) that will be used to create \emph{diverged samples} during the last \(d\) timesteps of the diffusion model's denoising process. They will use these keys to slightly alter the denoising trajectory at key points, introducing controlled divergence that facilitates the encoding and decoding of Alice's message.

\begin{definition}[Diverged Sample]
\label{def:diverged-sample}
Let \(\mathsf{DiffusionStep}(\cdot)\) denote one iterative step of the
diffusion model's denoising process. For a sample \(\mathbf{x}_t\) at
timestep \(t\) and a reference key \(k_i\), the \emph{diverged sample}
at timestep \(t-1\) is defined as:
\begin{equation}
    \label{eq:diverged-sample}
    \mathbf{x}^i_{t-1} \;:=\; \mathsf{DiffusionStep}(\mathbf{x}_t,\, t,\, k_i).
\end{equation}
If \(t = 1\), then \(\mathbf{x}^i_0\) is fully denoised and is thus called a \emph{reference sample}.
\end{definition}

\begin{definition}[Local \(\sf Mix\) Operation]
\label{def:mix}
Let \[
  X_t \;:=\; \bigl[\mathbf{x}_t^0,\;\dots,\;\mathbf{x}_t^{r-1}\bigr],
  \;\text{where } \mathbf{x}_t^i \;=\; \bigl\{x_{t,j}^i \mid j \in \mathcal{I}\bigr\},
\]
be a set of diverged samples at timestep \(t\), and let \(\mathbf{b} = [\,b_1,\;b_2,\;\dots,\;b_l\,]\)
be a bitstring of length \(l\). 
We define the local mixing operator \(\mathsf{Mix}(\mathbf{b}, X_t)\) as follows:
\begin{equation}
\label{eq:mix}
  \mathsf{Mix}(\mathbf{b}, X_t)
  \;:=\;
  \bigl[x_{t,1}^{\,b_1},\;x_{t,2}^{\,b_2},\;\dots,\;x_{t,l}^{\,b_l}\bigr].
\end{equation}
In other words, for each position \(j \in \{1,\dots, l\}\), the bit \(b_j\) selects which diverged sample \(\mathbf{x}_t^{\,b_j}\) contributes to each local patch or pixel \(x_{t,j}^{\,b_j}\).
\end{definition}

Let the timesteps of the diffusion model be indexed by \(t\) decreasing from \(T\) to \(0\), where \(t=0\) represents a fully denoised sample. The integer \(d\) indicates which of these later timesteps (\(t = d,\,\ldots,\,1\)) are used for divergence. In PSyDUCK, both parties obtain an intermediate sample \(\mathbf{x}_{d+1}\) via typical denoising using synchronization key $k_s$. From this point on, they use each reference key \(k_i\) to diverge from the original trajectory.

To embed a secret bitstring \(\mathbf{b} = [\,b_1,\dots,b_l]\), Alice first obtains the diverged samples \(\{\mathbf{x}_1^i\}_{i=0}^{r-1}\) (by denoising according to each \(\{k_i\}_{i=0}^{r-1}\) for $d$ iterations) and then \emph{mixes} them into a single sample using the local mixing operator \(\mathsf{Mix}\). She then denoises a final time to obtain a sample $\alice{\bx_0}$.

Prior to transmitting $\alice{\bx_0}$, Alice may perform some postprocessing on the sample.
This is not required if the diffusion model of choice operates in directly in the pixel space, so Alice can directly transmit $\alice{\bx_0}$ in such case.
However, for latent diffusion models, diverging and mixing both occur in latent space, so $\alice{\bx_0}$ requires decoding.
Let \(\mathrm{Enc}(\cdot)\) and \(\mathrm{Dec}(\cdot)\) refer to the respective encoder and decoder networks of the latent diffusion model.
Alice can finally transmit $\mathrm{Dec}\bigl(\alice{\bx_0}\bigr)$ to Bob. 

On Bob's end, he obtains \(\alice{\bx_0}\) (either directly from Alice's transmission or via $\mathrm{Enc}(\cdot)$ to undo Alice's postprocessing).
To extract $\mathbf{b}$, Bob repeats the same process of divergence using his copies of the reference keys $\{k_i\}_{i=0}^{r-1}$, but for a total of $d+1$ steps.
He is left with a set of reference samples that aggregate to \(\bob{X_0}\). Bob knows \(\mathbf{b}\) must be some element of \(\mathbb{B}^\ell\) (length-\(\ell\) bitstrings), so he recovers \(\mathbf{b}\) by finding the bitstring \(\bbh\) that best “matches” Alice’s final sample:
\[
  \bbh \;=\;
  \argmin_{\bb \in \mathbb{B}^\ell}
    \bigl\|\mathsf{Mix}(\bb,\,\bob{X_0}) \;-\; \alice{\bx_0}\bigr\|_{2}.
\]
By the local nature of \(\sf Mix\), Bob can thus be guaranteed to identify $\bbh$ in linear time.
For more fine-grained reference, pseudocode for the PSyDUCK encode and decode processes may be found in Appendix~\ref{appendix:algorithms}.

Figure~\ref{fig:encoding_framework} presents an overview of the PSyDUCK encoding and decoding processes. An illustration of the divergent trajectories and the local $\sf Mix$ operation where the number of divergent steps $d=2$ and the number of reference keys $r=2$ may be found in Figure~\ref{fig:two-subfigs}.

\begin{figure}[t]
    \centering
    
    \begin{subfigure}[b]{0.44\linewidth}
        \centering
        \includegraphics[width=0.99\linewidth]{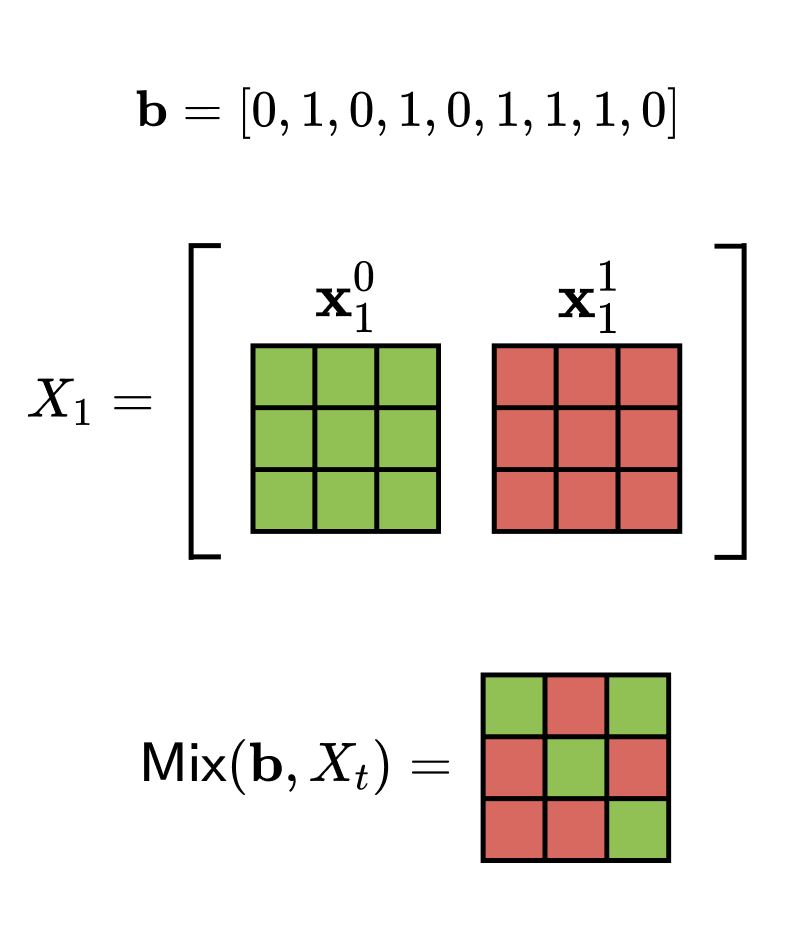}
        \caption{The $\sf Mix$ operation.}
        \label{fig:sub:left}
    \end{subfigure}
    \hfill
    \begin{subfigure}[b]{0.55\linewidth}
        \centering
        \includegraphics[width=0.99\linewidth]{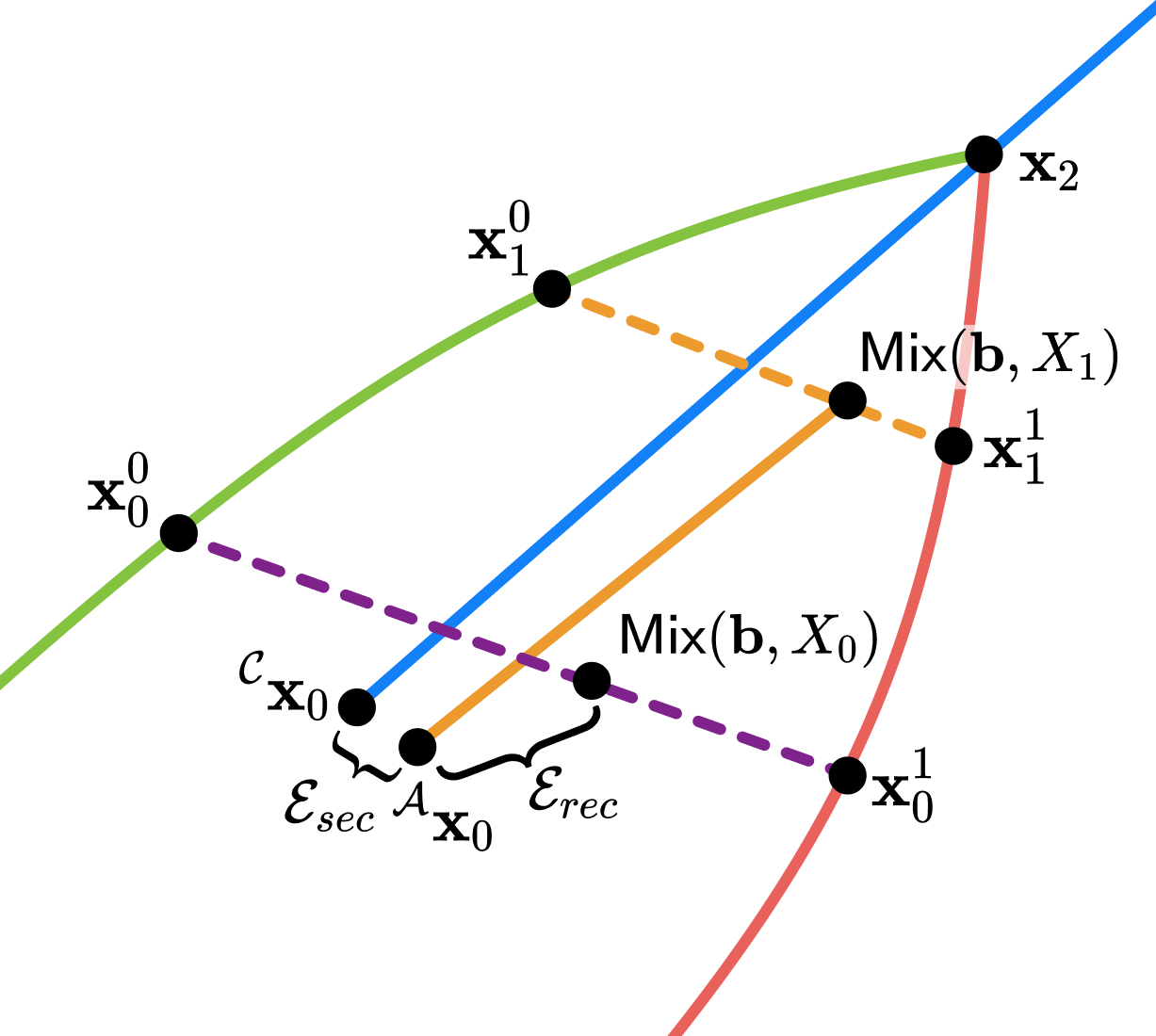}
        \caption{Diverging and mixing.}
        \label{fig:sub:right}
    \end{subfigure}

    \caption{\textbf{Diverging and Mixing.}
        \subref{fig:sub:left} An example of mixing the red and green trajectories based on $\mathbf{b}$ to form a mixed sample when $t=1$ and $r=2$.  
        \subref{fig:sub:right}
        Here, the blue path represents the original trajectory 
        of the diffusion model, resulting in 
        $\cover{\bx_0}$. The green path represents the divergent trajectory when conditioned with $k_0$, while the red path presents the divergent 
        trajectory when conditioned with $k_1$. The orange path shows Alice's trajectory upon mixing samples 
        from the two divergent paths.}
    \label{fig:two-subfigs}
\end{figure}

\subsection{Optimization} 

We consider two metrics for evaluating PSyDUCK: the security error \(\mathcal{E}_{sec}\) and the reconstruction error \(\mathcal{E}_{rec}\). The security error measures deviations from the original diffusion trajectory and is defined as
\[
  \mathcal{E}_{sec} = \| \alice{\bx_0} - \bx_0 \|.
\]
Minimizing \(\mathcal{E}_{sec}\) reduces detectability but limits the encoding capacity, since shorter divergence steps or fewer reference samples must be used. This definition of security error is useful for exploring the security guarantees of PSyDUCK, which we discuss in Section~\ref{theory}. However, in Section~\ref{experiments}, we evaluate the practical security of PSyDUCK with the detection rates of two well-known steganalyzers. 

The reconstruction error quantifies the accuracy with which the receiver recovers the embedded bitstring:
\[
  \mathcal{E}_{rec} = \| \alice{\bx_0} - \sf{Mix}(\mathbf{b}, \bob{X_0}) \|,
\]
where \(\sf {Mix}(\mathbf{b}, \bob{X_0})\) denotes the local mixing of reference samples according to Alice's message. Minimizing \(\mathcal{E}_{rec}\) improves decoding reliability but typically requires larger divergence steps or more reference samples, which in turn may increase \(\mathcal{E}_{sec}\).


\section{Theoretical Analysis}
\label{theory}
 We show two key results about the PSyDUCK framework:
\begin{enumerate}
    \item Bounded Security Error (Proposition~\ref{prop:err-sec-bound-formal}). If the noise estimator $\epsilon_\theta$ is bounded, then the difference $\|\alice{\bx_0} - \cover{\bx_0}\|$ between the steganographic sample $\alice{\bx_0}$ and the corresponding cover sample $\cover{\bx_0}$ is also bounded. Concretely, we show: 
    \[
        \|\alice{\bx_0} - \cover{\bx_0}\| \,\in\, O\bigl(d \,r\, \sigma_{d+1}\bigr).
    \]
    \item Security via Indistinguishable Noise (Proposition~\ref{prop:sto-d1-secure-formal}). When a diffusion model injects noise \emph{at the end} of each denoising step (i.e., a first-order SDE solver such as DDPM or DDIM), the security of PSyDUCK reduces to an adversary's ability to distinguish ``random noise'' from ``a mixture of random noise.'' If the mixture also appears random, the method is secure against detection.
\end{enumerate}

\begin{proposition}
\label{prop:err-sec-bound-formal}
(Bounded Security Error)\\
Let $\epsilon_\theta$ be a noise estimator that is bounded. That is, there exists a constant $C>0$ such that, for all relevant $\bx$ and $t$,
\[
  \|\epsilon_\theta(\bx, t)\| \;\le\; C.
\]
Then, under the PSyDUCK framework,
\[
  \|\alice{\bx_0} - \cover{\bx_0}\|
  \;\in\; O\bigl(d \, r \,\sigma_{d+1}\bigr).
\]
\end{proposition}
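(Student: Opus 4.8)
The plan is to track how each divergent trajectory drifts away from the cover trajectory step by step, to exploit the fact that all trajectories share the common intermediate sample $\bx_{d+1}$, and then to show that the per-step drift is dominated by the injected stochastic noise of scale $\sigma_t$ rather than by the deterministic denoiser. Writing a single $\mathsf{DiffusionStep}$ in the usual first-order form
\[
  \bx_{t-1} = a_t\,\bx_t - b_t\,\epsilon_\theta(\bx_t,t) + \sigma_t\,\bz_t,
\]
with schedule-dependent coefficients $a_t=1/\sqrt{\alpha_t}$ and $b_t=\beta_t/(\sqrt{\alpha_t}\sqrt{1-\bar{\alpha}_t})$, I would first subtract the cover update from the $i$-th divergent update to get a recursion for the discrepancy $\delta_t^i := \bx_t^i - \cover{\bx_t}$. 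Since both trajectories agree at timestep $d+1$ we have $\delta_{d+1}^i = \bzero$, and the very first divergent step cancels the deterministic part exactly, leaving $\delta_d^i = \sigma_{d+1}(\bz^i-\cover{\bz})$ — which already explains the appearance of $\sigma_{d+1}$ in the bound.

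Next I would take norms of this recursion. The key observation is that the hypothesis $\|\epsilon_\theta\|\le C$ controls the denoiser-difference term by $2b_tC$ \emph{without} any Lipschitz assumption, and that for a standard variance schedule $b_t = O(\beta_t)$ while $\sigma_t = \Theta(\sqrt{\beta_t})$, so the injected-noise difference $\sigma_t(\bz_t^i - \cover{\bz_t})$ is the dominant per-step contribution and is of order $\sigma_t$ (treating $\E\|\bz\|=O(\sqrt{n})$ as a constant absorbed into the $O(\cdot)$, since the Gaussian increments are formally unbounded). This yields a scalar recursion $\|\delta_{t-1}^i\| \le a_t\|\delta_t^i\| + O(\sigma_t)$.

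I would then unroll this recursion over the $d$ divergent steps from $t=d+1$ down to $t=1$. Since $a_t = 1/\sqrt{1-\beta_t} = 1+O(\beta_t)$ and the total injected variance $\sum_{t\le d+1}\beta_t$ is bounded, the cumulative amplification $\prod_t a_t$ stays $O(1)$; bounding each $\sigma_t$ by $\sigma_{d+1}$ (monotonicity of the schedule over the divergent window) then gives $\|\delta_1^i\|\in O(d\,\sigma_{d+1})$ for every reference index $i$. The factor $r$ enters through the mixing step: because each coordinate of $\mathsf{Mix}(\bb,X_1)$ is copied from exactly one of the $r$ divergent samples, a coordinatewise triangle inequality bounds $\|\mathsf{Mix}(\bb,X_1)-\cover{\bx_1}\|$ by $\sum_{i}\|\delta_1^i\|$, i.e.\ by $O(r\,d\,\sigma_{d+1})$ (in fact a $\sqrt{r}$ factor suffices, but $r$ follows a fortiori). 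A final application of the single-step estimate to the denoising from $t=1$ to $t=0$ — again with an $O(1)$ multiplier and a bounded denoiser term — propagates this to $\|\alice{\bx_0}-\cover{\bx_0}\|\in O(d\,r\,\sigma_{d+1})$.

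The main obstacle is the unrolling in the third step: a priori the recursion could compound multiplicatively and produce an exponential-in-$d$ factor. The crux is therefore to show that the cumulative amplification $\prod_t a_t$ remains $O(1)$ and that the deterministic denoiser contribution is genuinely lower order than the injected noise, both of which rely on the smallness and boundedness of the variance schedule $\{\beta_t\}$ rather than on any regularity of $\epsilon_\theta$ beyond the stated bound. A secondary technical point I would handle carefully is the unboundedness of the Gaussian increments $\bz_t$, which forces the estimate to be read in expectation (or with high probability), with the dimension-dependent factor $\E\|\bz_t\|$ folded into the implied constant.
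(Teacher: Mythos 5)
Your proof is correct, but it takes a genuinely different route from the paper's. The paper never forms the trajectory-to-trajectory discrepancy $\delta_t^i$: instead it proves three \emph{displacement} lemmas --- a single-step bound $\|\bx_t-\bx_{t-1}\|\le\kappa\,\sigma_t$, a telescoped $d$-step bound $O(d\,\sigma_t)$, and a mixing bound $\|\mathsf{Mix}(\bb,X_1)-\bx_{d+1}\|\in O(r\,d\,\sigma_{d+1})$ --- and then runs the triangle inequality through the shared anchor $\bx_{d+1}$, bounding $\|\alice{\bx_0}-\mathsf{Mix}(\bb,X_1)\|+\|\mathsf{Mix}(\bb,X_1)-\bx_{d+1}\|+\|\bx_{d+1}-\cover{\bx_0}\|$ by $O(\sigma_1)+O(r\,d\,\sigma_{d+1})+O(d\,\sigma_{d+1})$, never comparing the two trajectories stepwise. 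Your recursion $\|\delta_{t-1}^i\|\le a_t\|\delta_t^i\|+O(\sigma_t)$ with $\delta_{d+1}^i=\bzero$ reaches the same bound but buys several things the paper's argument glosses over: (i) only \emph{differences} appear, so you never need the paper's implicit assumption that the per-step drift $f(\bx_t,t,k)$ --- which contains $(a_t-1)\bx_t$ and the key-dependent injected noise --- is itself uniformly $O(\sigma_t)$ in the absolute sample $\bx_t$; (ii) you make explicit that no exponential-in-$d$ blow-up occurs because $\prod_t a_t=O(1)$, a point the anchor argument sidesteps by never composing steps multiplicatively; (iii) you flag that the Gaussian increments are unbounded so the estimate must be read in expectation or with high probability, whereas the paper's single-step lemma silently absorbs them into $\kappa\,\sigma_t$; and (iv) your $\sqrt{r}$ mixing bound and the exact cancellation of the fresh noise at the final $k_s$-keyed step (both trajectories share that key, so the last injected noise drops out of the difference) are both slightly sharper than the paper's $r$-fold coordinatewise sum and its standalone $O(\sigma_1)$ term. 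One claim to soften: $b_t=O(\beta_t)$ requires $1-\bar{\alpha}_t$ bounded away from zero, which can fail at very small $t$; the weaker bound $b_t\le\sqrt{\beta_t/\alpha_t}=O(\sigma_t)$ always holds and is all your recursion needs. The tradeoff overall is that the paper's route is shorter, with the real work buried in asserting its single-step lemma, while yours is longer but localizes exactly which properties of the schedule $\{\beta_t\}$ the $O(d\,r\,\sigma_{d+1})$ bound actually uses.
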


We show with Lemmas~\ref{lem:noise-bounded}, \ref{lem:dstep-bounded}, and \ref{lem:mix-bounded} that each diverged sample is within $O(d\,\sigma_{d+1})$ of $\bx_{d+1}$. Thus, elementwise mixing among $r$ such samples incurs an $O(r\; d \; \sigma_{d+1})$ deviation. 
\begin{proof}[Abridged Proof of Proposition~\ref{prop:err-sec-bound-formal}]
By Lemma~\ref{lem:noise-bounded}, the step from $\alice{\bx_0}$ to $\mathsf{Mix}(\bb,X_1)$ is on the order of $\sigma_1$. Next, applying Lemma~\ref{lem:mix-bounded} shows that mixing diverged samples and comparing to $\alice{\bx_{d+1}}$ introduces an additional $O(r\,d\,\sigma_{d+1})$ difference. Finally, Lemma~\ref{lem:dstep-bounded} shows that going from $\alice{\bx_{d+1}}$ to $\cover{\bx_0}$ adds another $O(d\,\sigma_{d+1})$. Combining these via the triangle inequality,
\begin{equation}
    \begin{split}
        \|\alice{\bx_0} - \cover{\bx_0}\| &\;\in\;  O(\sigma_1) + O(r\,d\,\sigma_{d+1}) + O(d\,\sigma_{d+1}) \\
         &\;\subseteq\; O\bigl(d\,r\,\sigma_{d+1}\bigr).
    \end{split}
\end{equation}
Since $\sigma_1$ can be treated as a small or absorbed constant for large $d$, the final bound is $O(r\,d\,\sigma_{d+1})$.
\end{proof}

Secondly, we state and prove that for a first-order SDE-based diffusion model---where noise is appended after the denoising step---any mixture of noise remains indistinguishable from pure noise.

\begin{proposition}
\label{prop:sto-d1-secure-formal}
(Stochastic Security)\\
Let $\mathrm{DiffusionStep}(\bx_t, t, k_i) = \mathrm{Model}(\bx_t, t, k_s) + \sigma_t \,\epsilon_i$, where $\mathrm{Model}(\cdot)$ is a deterministic denoiser keyed by $k_s$ and $\epsilon_i$ is noise generated by key $k_i$ appended after the denoising step.
Security reduces to whether an adversary can distinguish honest noise $\epsilon$ from a mixed noise
$\mathsf{Mix}(\bb,\{\epsilon_0,\ldots,\epsilon_{r-1}\})$. If they are indistinguishable, the scheme is secure.
\end{proposition}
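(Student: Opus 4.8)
The plan is to prove the statement as a security reduction built on a \emph{post-processing} argument. I take security to mean that the law of Alice's transmitted sample $\alice{\bx_0}$ is computationally indistinguishable from the law of an honest cover sample $\cover{\bx_0}$, and I reduce this to the indistinguishability of honest noise $\epsilon$ from the mixed noise $\mathsf{Mix}(\bb,\{\epsilon_0,\ldots,\epsilon_{r-1}\})$. The engine of the reduction is that, once the two competing noise vectors are injected, both trajectories are completed by the \emph{same} deterministic map, so no adversary can gain any advantage that it did not already have against the noise itself.

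First I would make the structural observation precise. Because the solver is first order, noise is appended \emph{after} the deterministic denoiser, and $\mathrm{Model}(\cdot,\cdot,k_s)$ is keyed only by the shared synchronization key; hence every diverged sample obtained in a single divergence step from the common intermediate $\bx_{d+1}$ carries the identical deterministic term and differs solely through its additive noise,
\[
  \bx^i \;=\; \mathrm{Model}(\bx_{d+1}, d{+}1, k_s) + \sigma_{d+1}\,\epsilon_i .
\]
Applying Definition~\ref{def:mix} coordinatewise, the shared term factors out of the mix, giving
\[
  \mathsf{Mix}(\bb, X) \;=\; \mathrm{Model}(\bx_{d+1}, d{+}1, k_s) + \sigma_{d+1}\,\mathsf{Mix}\bigl(\bb,\{\epsilon_0,\ldots,\epsilon_{r-1}\}\bigr).
\]
Thus \emph{mixing samples is algebraically identical to running one honest denoising step into which mixed noise has been substituted for fresh Gaussian noise}; this identity is the crux of the whole argument.

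With it in hand the reduction is immediate. Let $F$ denote the remaining deterministic denoising that carries a timestep-$1$ state to a final sample; this map is fixed once $k_s$ and any later fresh noises are frozen, and it is independent of the message and of $k_0,\dots,k_{r-1}$. Feeding honest noise into the single step and applying $F$ produces exactly $\cover{\bx_0}$, whereas feeding the mixed noise and applying the same $F$ produces exactly $\alice{\bx_0}$. Because $F$ is fixed post-processing, the data-processing inequality bounds any advantage in distinguishing $\alice{\bx_0}$ from $\cover{\bx_0}$ by the advantage in distinguishing honest noise from mixed noise; equivalently, a distinguisher for the samples is turned into one for the noise simply by simulating $F$. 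Taking the contrapositive yields the claim: indistinguishable noise implies an undetectable scheme.

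The step I expect to be the main obstacle is the factoring identity, because it requires all mixed samples to share a single deterministic prefix. This holds for one divergence step but breaks for $d>1$: after the first diverged step the reference trajectories occupy distinct states, so $\mathrm{Model}$ no longer emits a common term and ``mixing samples'' ceases to equal ``mixing noise.'' This is exactly why the secure regime is the single-step, first-order case, and why the general situation is instead controlled by the $O(d\,r\,\sigma_{d+1})$ drift of Proposition~\ref{prop:err-sec-bound-formal}; a full multi-step statement would need a hybrid argument swapping one step at a time. A secondary technicality is the final $t=1\to 0$ transition: under the usual $\sigma_1=0$ convention it is deterministic and folds into $F$, and otherwise its fresh noise has identical law under both hypotheses and can simply be resampled, so it never affects the advantage. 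I stress that the proposition only carries out this reduction --- whether the mixed noise genuinely matches the Gaussian law is its hypothesis, to be argued separately and supported empirically.
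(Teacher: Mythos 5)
Your proposal is correct and takes essentially the same route as the paper's proof: you factor the shared deterministic term out of $\mathsf{Mix}$ (the paper's ``linearity of $\mathsf{Mix}$'' step, $\mathsf{Mix}(\bb,X) = \mathrm{Model}(\bx_2) + \sigma_2\,\mathsf{Mix}(\bb,\{\epsilon_0,\dots,\epsilon_{r-1}\})$) and then observe that both the cover and stego trajectories are completed by the same deterministic denoiser plus identically distributed fresh final-step noise, so distinguishing samples reduces to distinguishing honest from mixed noise. Your explicit data-processing reduction, the handling of the $t=1\to 0$ noise, and the observation that the factoring identity breaks for $d>1$ (explaining why the proposition is stated for the single-divergence-step case) only make rigorous what the paper's proof leaves informal.
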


\begin{proof}[Abridged Proof of Proposition~\ref{prop:sto-d1-secure-formal}]
A standard diffusion step at $t=1$ gives:
\[
  \cover{\bx_0} 
  = \mathrm{DiffusionStep}(\bx_1, 1, k_s)
  = \mathrm{Model}(\bx_1) + \sigma_1\,\epsilon.
\]
Under Psyduck with $d=1$, we mix noise at $t=2$:
\begin{equation}
    \begin{split}
        \alice{\bx_1} &= \mathsf{Mix}\bigl(\bb,\{\bx_1^0,\dots,\bx_1^{r-1}\}\bigr), \\
        \text{where} & \quad 
  \bx_1^i = \mathrm{Model}(\bx_2) + \sigma_2\,\epsilon_i.
    \end{split}
\end{equation}
  
By linearity of $\mathsf{Mix}$, 
\[
  \alice{\bx_1}
  = \mathrm{Model}(\bx_2) + \sigma_2 \,\mathsf{Mix}\bigl(\bb,\{\epsilon_0,\dots,\epsilon_{r-1}\}\bigr).
\]
Then the final sample at $t=1$ is
\begin{equation}
    \begin{split}
         \alice{\bx_0} 
  &= \mathrm{DiffusionStep}(\alice{\bx_1}, 1, k_s) \\
  &= \mathrm{Model}\bigl(\alice{\bx_1}\bigr) + \sigma_1\,\epsilon.
    \end{split}
\end{equation}
 
In both cases, $\mathrm{Model}$ is deterministic, and the final term $\sigma_1\,\epsilon$ appears as fresh noise. Hence any difference between $\cover{\bx_0}$ and $\alice{\bx_0}$ comes solely from whether $\mathsf{Mix}(\bb,\{\epsilon_0,\dots,\epsilon_{r-1}\})$ is distinguishable from an i.i.d.\ noise sample $\epsilon$. If not, no adversary can detect the presence of steganography.
\end{proof} 

Extended proofs and relevant lemmas can be found in Appendix~\ref{appendix:proofs}.

\section{Experiments}
\label{experiments}
We evaluate the performance of the PSyDUCK framework across various steganographic tasks, including both image and video applications. Our experiments cover pixel-based and latent-based diffusion models, with a focus on how controlled divergence affects recovery accuracy, encoding capacity, and detection rates. We first compare PSyDUCK to existing baselines to assess its effectiveness and scalability. Next, we examine its performance in video-based steganography. Finally, ablation studies are conducted to analyze the influence of key factors such as divergent step counts, model precision, and base image type. 

\subsection{Experimental Details}
\paragraph{Models}

In our experiments with pixel-based image steganography, we use four open-source diffusion models from the DDPM~\citep{ho2020ddpm} paper: \texttt{celeb}, \texttt{bedroom}, \texttt{cat}, and \texttt{church}. For latent-based image steganography, we use Stable Diffusion (SD) version $2.1$, conditioned on text inputs~\cite{rombach2022highresolutionimagesynthesislatent}.
%
For video experiments, we use Stable Video Diffusion (SVD), an image-to-video latent model we condition with ImageNet samples.
%
\begin{table}[t]
    \centering
    \resizebox{\columnwidth}{!}{%
    \begin{tabular}{c c c c c c}
        \toprule
        \multirow{2}{*}{\textbf{Model Type}} & \multirow{2}{*}{\textbf{Stegosystem}} & \multirow{2}{*}{\textbf{Bytes}} & \multirow{2}{*}{\textbf{Acc.}} & \multicolumn{2}{c}{\textbf{Detection Rate}} \\
        \cmidrule(lr){5-6}
        & & & & \textbf{SRNet} & \textbf{SiaStegNet} \\
        \midrule
        \multirow{8}{*}{Pixel-based} 
        & Pulsar & 541.7 & 94.00 & 50.0 & 50.0 \\
        & DiffusionStego & 8192 & \textbf{98.46} & 76.3 & 85.0 \\
        & StegaDDPM & 512 & 88.62 & 50.0 & 50.0 \\
        \cmidrule(lr){2-6}
        & Psyduck ($d = 1$) & 512 & 92.95 & \textbf{50.0} & \textbf{50.0} \\
        & Psyduck ($d = 2$) & 512 & 97.47 & 56.3 & 50.0 \\
        & Psyduck ($d = 3$) & 1536 & 96.47 & 57.4 & 55.0 \\
        & Psyduck ($d = 10$) & 8192 & 95.63 & 76.3 & 80.0 \\
        \midrule
        \multirow{6}{*}{Latent-based}
        & DiffusionStego & 32 & 72.38 & 74.6 & 75.0 \\
        & StegaDDPM & 32 & 67.47 & 50.0 & 50.0 \\
        \cmidrule(lr){2-6}
        & Psyduck ($d = 1$) & 96 & 94.90 & \textbf{50.0} & \textbf{50.0} \\
        & Psyduck ($d = 2$) & 96 & 97.77 & 51.2 & 50.0 \\
        & Psyduck ($d = 3$) & 96 & \textbf{98.42} & 51.5 & 60.9 \\
        & Psyduck ($d = 10$) & 512 & 94.65 & 84.2 & 85.0 \\
        \bottomrule
    \end{tabular}}
    \caption{\textbf{Comparison of pixel-based and latent-based models.} \small A comparison of training-free steganography schemes across pixel- and latent-based approaches. Reported metrics include bytes encoded, transmission accuracy (Acc.), and detection rates (SRNet, SiaStegNet) for various methods, including our Psyduck stegosystem with varying divergence levels $d$.}
    \label{tab:latent-comp}
\end{table}

\paragraph{Metrics}

To evaluate the performance of PSyDUCK, we report the number of bytes encoded per frame, the accuracy of transmission, and the rate of detection by steganalysis tools for each of our experiments. Higher values of bytes encoded and accuracy of transmission indicate better performance, while lower detection rates indicate better performance. A $50.0\%$ detection rate indicates that the steganalysis tool is unable to perform better than random guessing. We use two well-known steganalyzers, SRNet \citep{wu2019editingtextwild} and SiaStegNet \citep{youSteganalysis2020}. 

\paragraph{Data and Prompts}

To generate stegosamples with SD v2.1 for latent image steganography, we use a series of prompts, which are reproduced in Appendix~\ref{appendix:experimental-details}. To generate video stegosamples with SVD, instead, we randomly sample 100 base images from ImageNet~\citep{imagenet} and use them as input of the text-to-video system. For each experiment, we independently train a steganalyzer, retraining it for each hyperparameter set using cover and stegoimages from the corresponding diffusion model to improve its ability to detect hidden information. Since the steganalyzers are not designed for video, we preprocess the video experiments by extracting individual frames and then train and evaluate the models frame by frame, treating them as independent images.

\subsection{Image Steganography}
\begin{figure}[t]
    \centering
    \begin{subfigure}{\linewidth}
    \setlength{\tabcolsep}{2pt}
    \resizebox{\linewidth}{!}{%
    \begin{tabular}{cccccc}
        &\makecell{\large The armaments \\[-0.3em] \large have moved.} & 
    \makecell{\large Ain't I \\[-0.3em] \large a woman?} & 
    \makecell{\large The revolution will \\[-0.3em] \large not be televised.} & 
    \makecell{\large The Dude abides.} & 
    \makecell{\large A single spark can \\[-0.3em] \large start a prairie fire.} \\

        \raisebox{2.7em}{\rotatebox{90}{\Large{Cover}}}& \includegraphics[width=3cm]{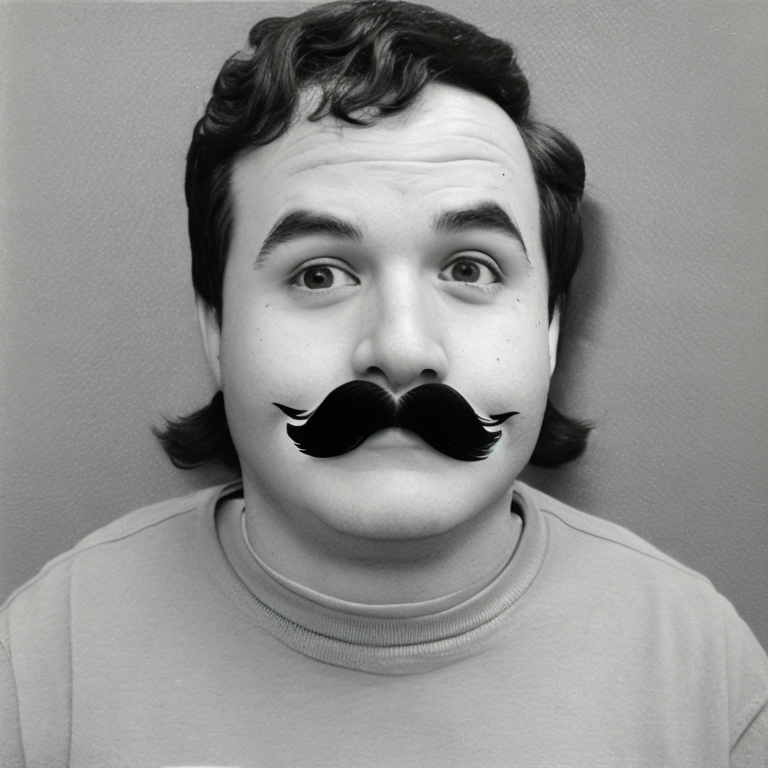} &
        \includegraphics[width=3cm]{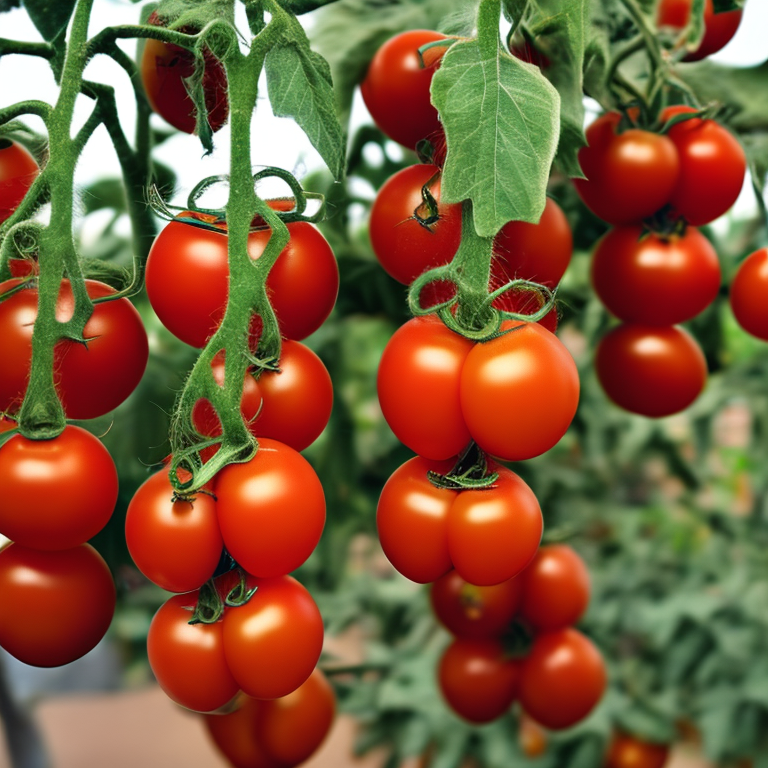} &
        \includegraphics[width=3cm]{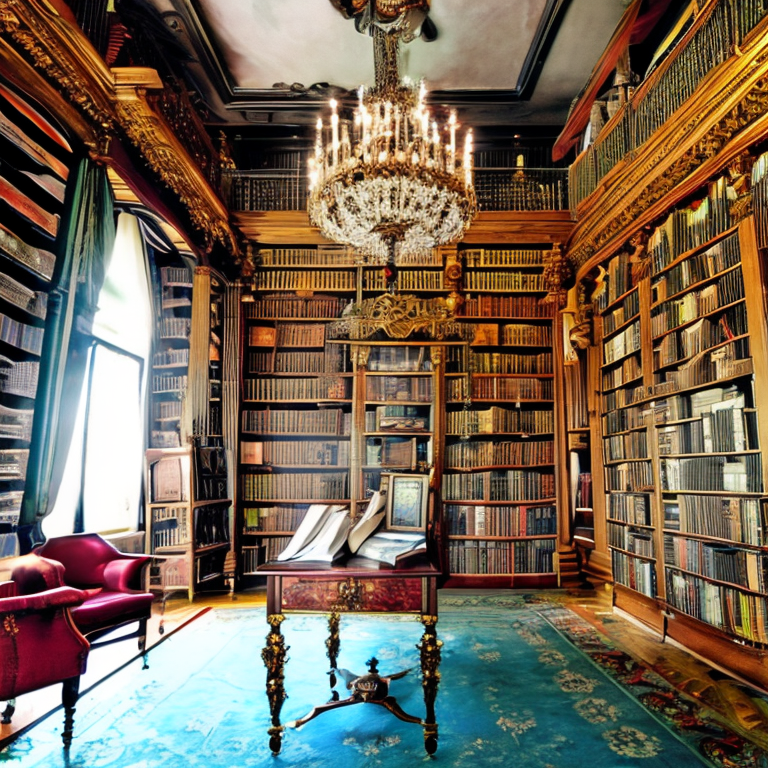} &
        \includegraphics[width=3cm]{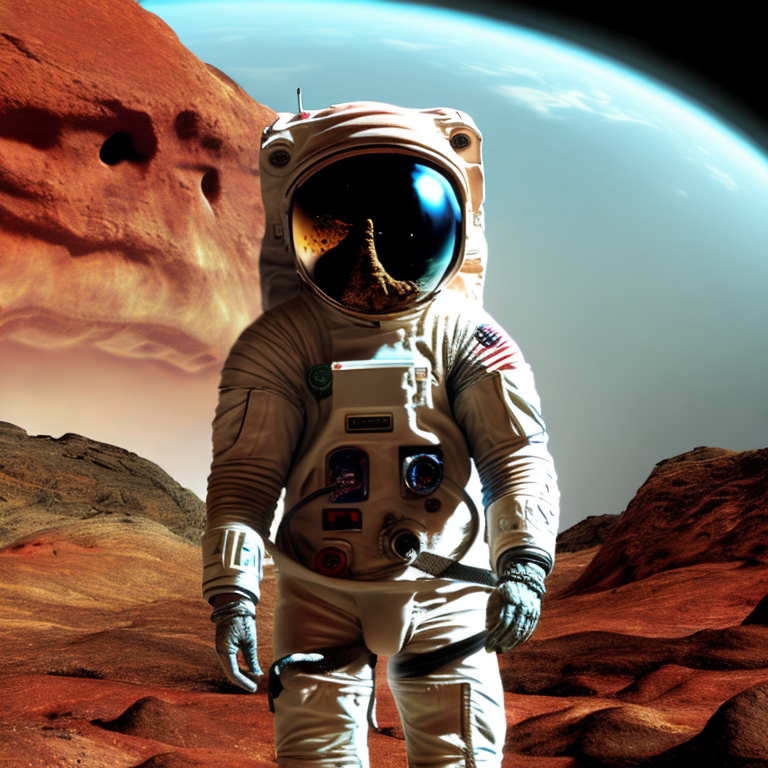} &
        \includegraphics[width=3cm]{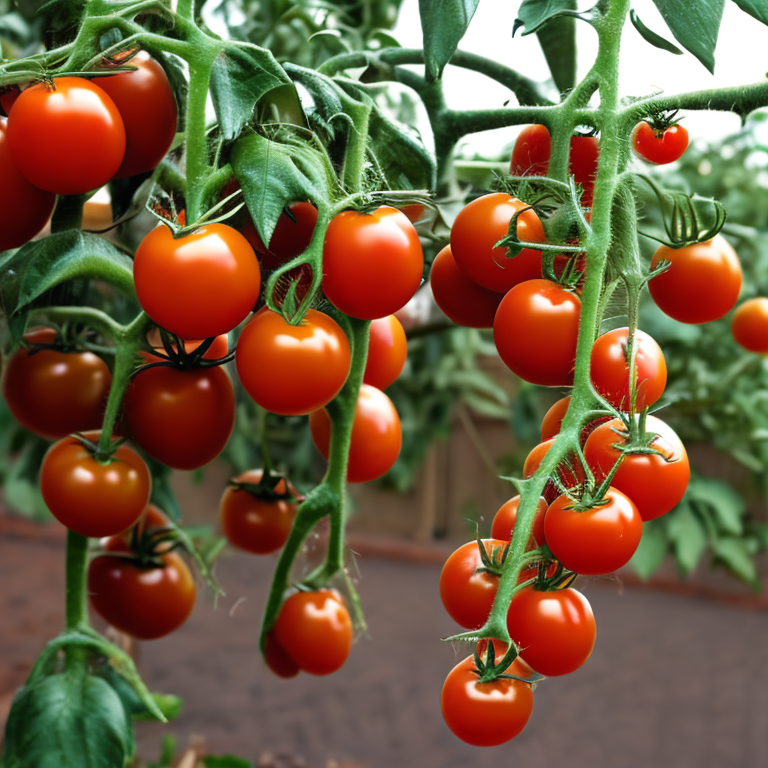} \\
        \raisebox{2.7em}{\rotatebox{90}{\Large{Stego}}}&
        \includegraphics[width=3cm]{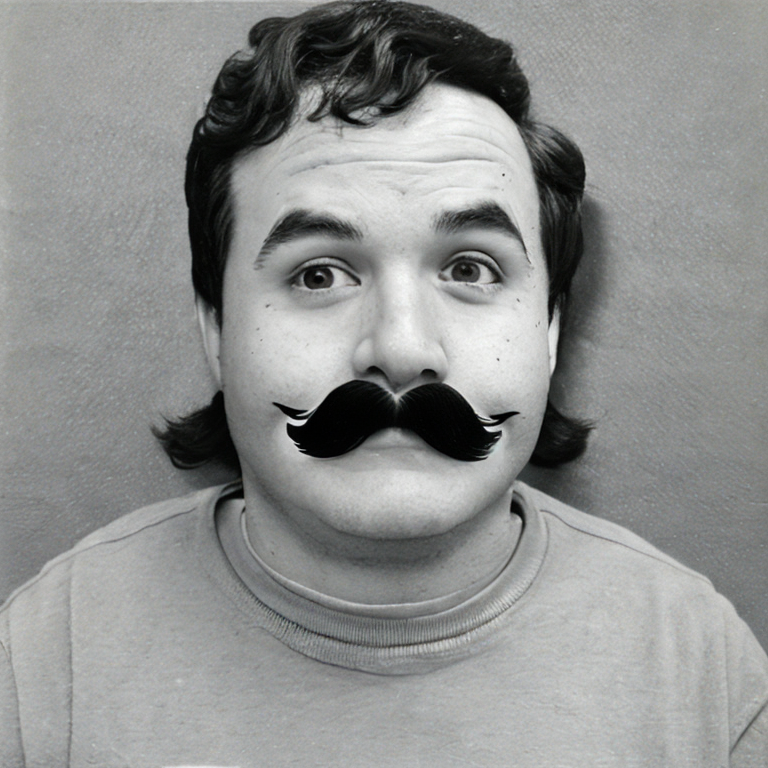} &
        \includegraphics[width=3cm]{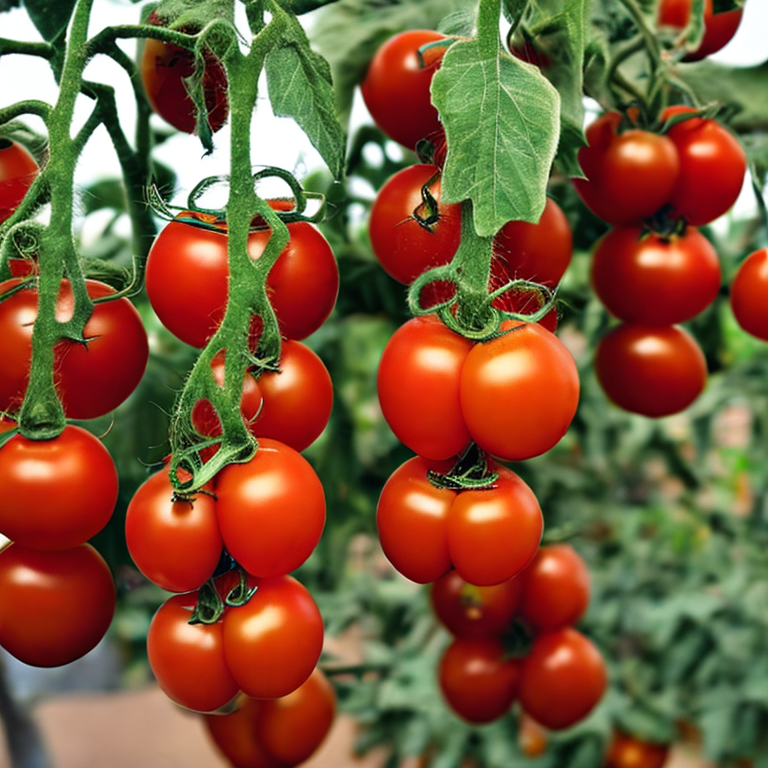} &
        \includegraphics[width=3cm]{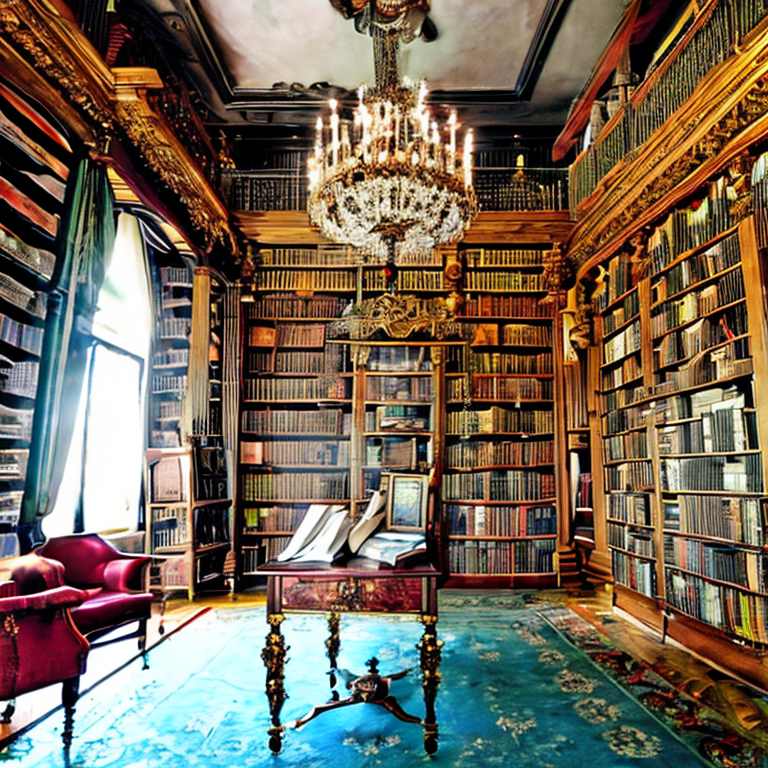} &
        \includegraphics[width=3cm]{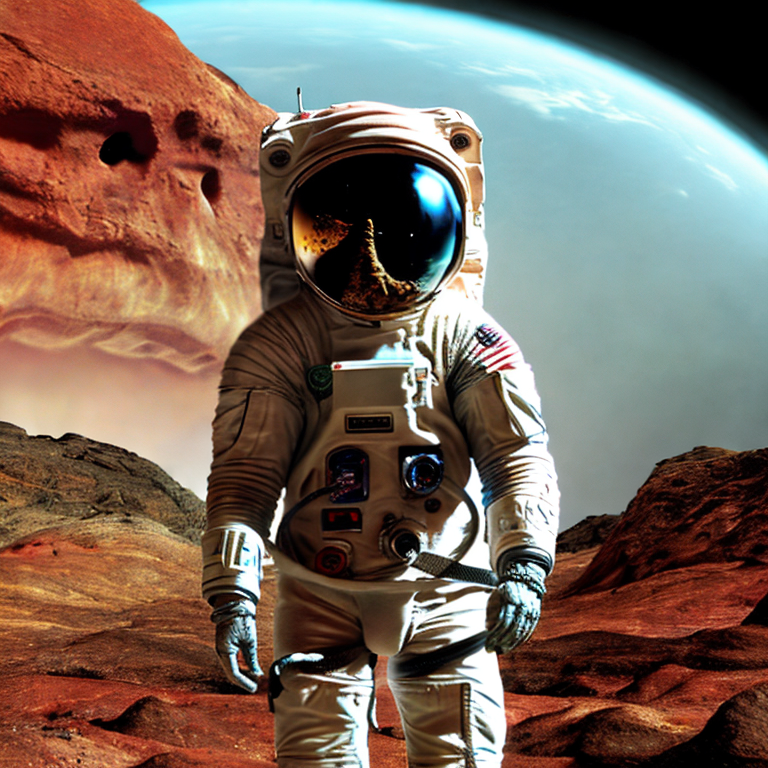} &
        \includegraphics[width=3cm]{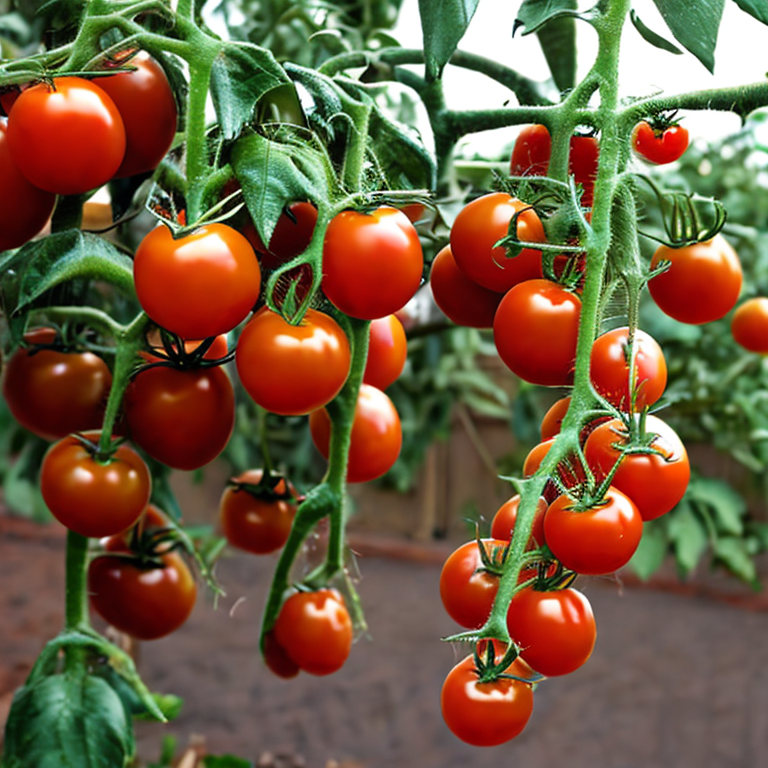} \\
    \end{tabular}}
    \caption{Cover and stegoimages}\label{fig:qual-img}

    \end{subfigure}

    \begin{subfigure}{\linewidth}
    \setlength{\tabcolsep}{2pt}
    \resizebox{\linewidth}{!}{%
    \begin{tabular}{cccccc}
    &\Large Cover & \Large$d=1$ & \Large$d=3$ & \Large$d=10$ & \Large$d=20$\\
    \raisebox{2.7em}{\rotatebox{90}{\Large Image}}&
        \includegraphics[width=3cm]{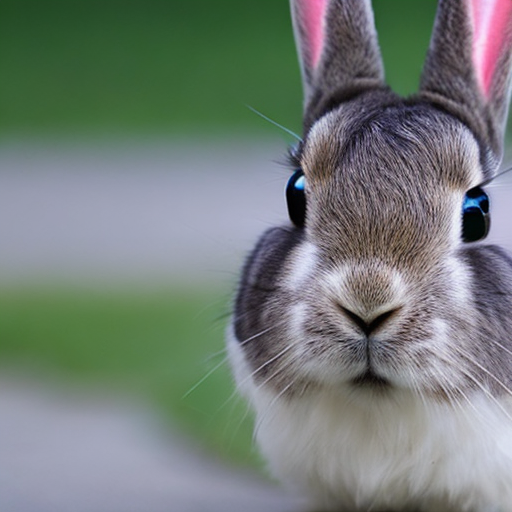} &
        \includegraphics[width=3cm]{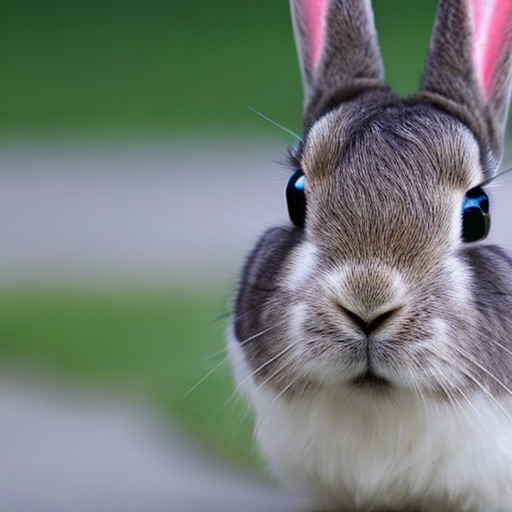} &
        \includegraphics[width=3cm]{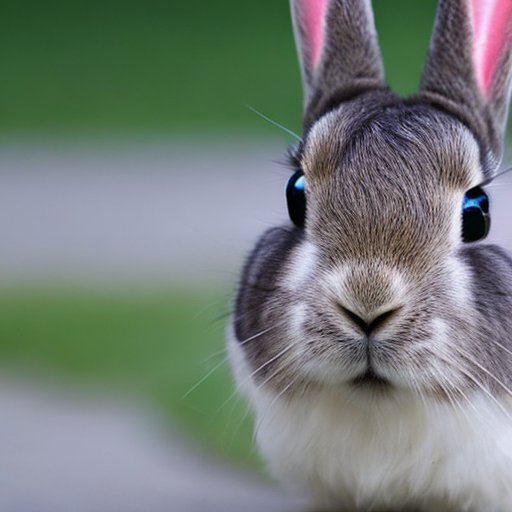} &
        \includegraphics[width=3cm]{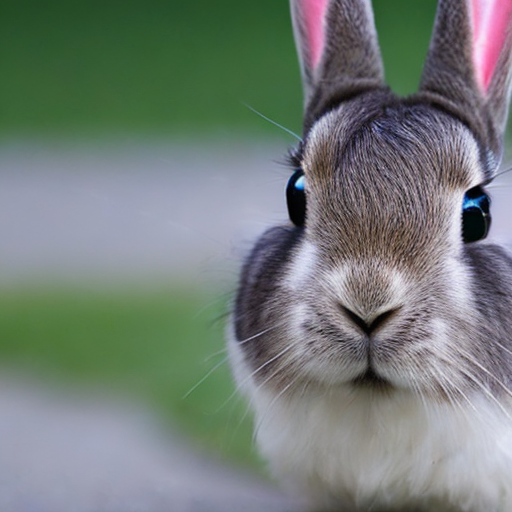} &
        \includegraphics[width=3cm]{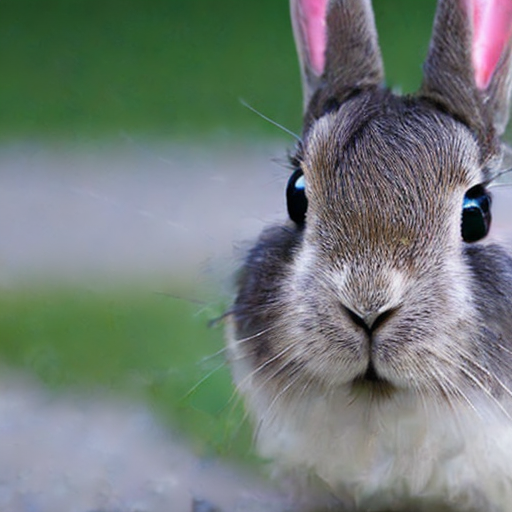} \\ 
        &&
        \includegraphics[width=3cm]{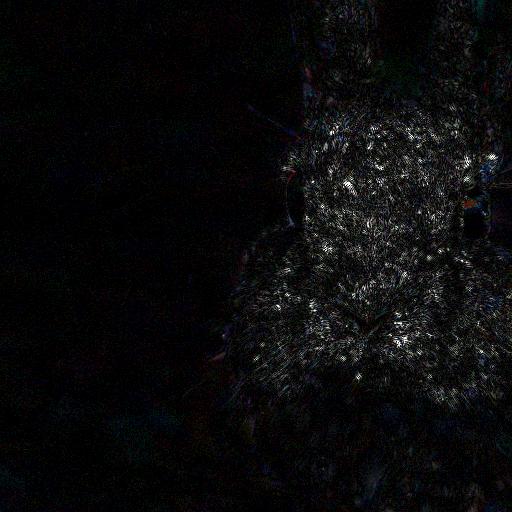} &
        \includegraphics[width=3cm]{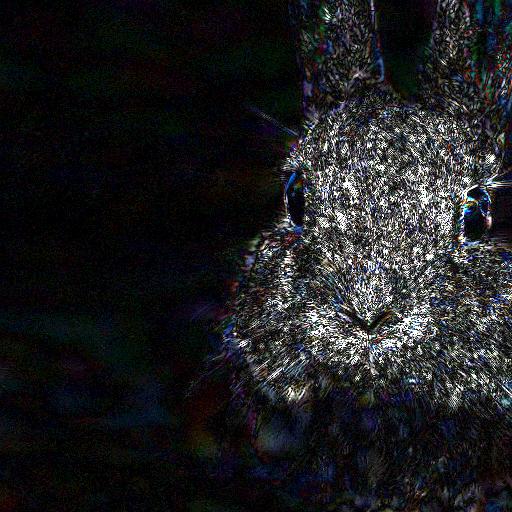} &
        \includegraphics[width=3cm]{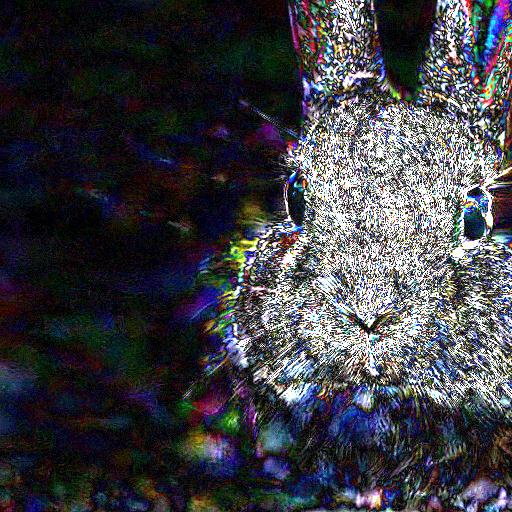} &
        \includegraphics[width=3cm]{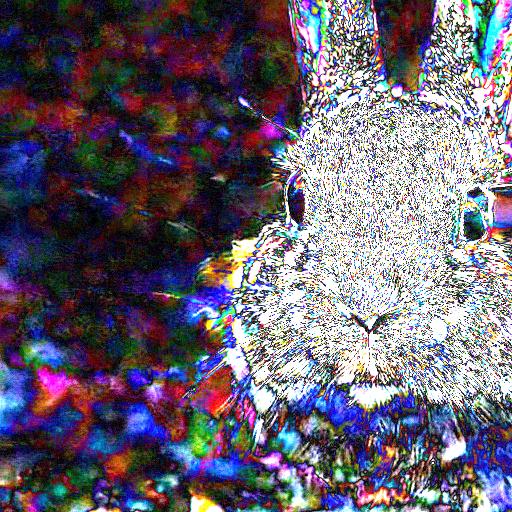} \\
    \end{tabular}}
    \caption{Effects of $d$}\label{fig:qual-d}
    \end{subfigure}
    \caption{\textbf{Qualitative analysis of SD v2.1 stegosamples.} \subref{fig:qual-img} Examples of stegoimages obtained by encoding the text reported at the top on the cover image with PSyDUCK and SD v2.1. Stegoimages are perceptually undistinguishable. \subref{fig:qual-d} Images generated from SD v2.1 using identical keys but varying divergent step count $d$. For easing visualization, we display the differences between cover and stego image magnified by $20\times$ in the bottom row.}
    \label{fig:qual-img-complete}
\end{figure}

\paragraph{Quantitative Evaluation}
To ease comparisons with existing baselines, we begin by presenting experiments on established image steganography methods. Specifically, we evaluate against Pulsar~\citep{kaptchuk2023pulsar}, StegaDDPM~\citep{peng2023stegaddpm}, and DiffusionStego~\citep{kim2024diffusionstego}. To demonstrate the versatility of PSyDUCK across different scenarios, we first compare its performance to the baselines using \textit{pixel}-based diffusion models. Since we are encoding bitstrings, we use only two reference keys ($r = 2$) throughout the subsequent experiments. The number of bytes encoded in each model is a function of the number of divergent steps $d$, the number of reference keys $r$, and the size of the model’s embedding space, whether in pixel-space for \textit{pixel}-based models or latent-space for \textit{latent}-based models. The results, summarized in Table~\ref{tab:latent-comp}, indicate that PSyDUCK closely matches the performance of Pulsar when the number of divergent steps $d=2$. PSyDUCK achieves slightly higher recovery accuracy but with a marginally higher detection rate by the SRNet steganalyzer. DiffusionStego, on the other hand, demonstrates significantly higher throughput and recovery accuracy, at the cost of substantially increased detection rates. 

The true potential of PSyDUCK is demonstrated through experiments with \textit{latent}-based diffusion models. We evaluate PSyDUCK performance using SD v2.1 as the model backbone and compare it against baseline methods. As shown in Table~\ref{tab:latent-comp}, PSyDUCK consistently outperforms the baselines in both throughput and recovery accuracy. Specifically, for $d = 1$, PSyDUCK achieves 94.90\% recovery accuracy while maintaining low detection rates of 50.0\% by the SRNet steganalyzer, matching StegaDDPM’s detection rates but with significantly higher throughput. Increasing the number of divergent steps to $d = 2$ further boosts recovery accuracy to 97.77\%, with negligible impact on detection rates (51.2\% and 50.0\%). The maximum recovery accuracy of 98.42\% is achieved at $d = 3$, though it comes with a slight increase in detection rates (51.5\% and 60.9\%). When $d = 10$, PSyDUCK sacrifices some accuracy (94.65\%) in exchange for higher throughput; detection rates increase to 84.2\% and 85.0\%.\\

\paragraph{Qualitative Evaluation}
We display in Figure~\ref{fig:qual-img} several stegoimages generated with SD v2.1 with corresponding embedded messages. As visible, the perceptual difference between cover images and stegoimages is marginal. As a further evidence of the high quality of our generated samples, we illustrate in Figure~\ref{fig:qual-d} the visual effect of varying the divergent step count $d$ on images generated by SD v2.1, with differences from the cover image magnified 20 times. When $d = 1$, the differences are minimal, while more pronounced and visually significant alterations are observed as $d$ increases, particularly at $d = 20$. Based on these findings, in subsequent experiments with latent video diffusion, we focus on divergent step counts of $d = 1$, $2$, and $3$.

\paragraph{Image Perceptual Metrics}

We further evaluated image degradation introduced by the PSyDUCK framework using CLIPScore~\cite{hessel2021clipscore} and Fréchet Inception Distance (FID)~\cite{Seitzer2020FID}. Ideally, a good stegosystem should not introduce significant degradation into cover images. We use CLIPScore to assess semantic alignment between generated images and their textual prompts. FID quantifies perceptual differences between image distributions, hence we evaluate the FID between a collection of sampled stegoimages and their corresponding original covers. We compare using SD v2.1 with varying divergent step counts $d$. The CLIPScore remained relatively stable, with values of 0.8331 on the cover images, 0.8331 with $d=1$, 0.8324 with $d=2$, and 0.8320 with $d=3$, indicating minimal degradation of prompt following. FID measured 0.266 for $d=1$, 3.860 for $d=2$, and 5.217 for $d=3$, showing greater perceptual differences between the cover images and stegoimages as divergence increased.



\subsection{Video Steganography}

In this section, we conduct experiments with the latent video diffusion model SVD. We compare the performance of the PSyDUCK framework with the work of ~\citet{mao2024videostego}, which uses a \textit{trained} encoder-decoder paradigm to encode messages into the latent space of a generated video. 

\begin{table}[t]
    \centering
    \resizebox{\columnwidth}{!}{%
    \begin{tabular}{c c c c c c}
        \toprule
        \multirow{2}{*}{\raisebox{-0.3em}{\textbf{Stegosystem}}} & \multirow{2}{*}{\raisebox{-0.3em}{\shortstack{\textbf{Bytes Encoded} \\ \textbf{Per Frame}}}} & \multirow{2}{*}{\shortstack{\textbf{Transmission} \\ \textbf{Accuracy}}} & \multicolumn{2}{c}{\textbf{Detection Rate}} \\
        \cmidrule(lr){4-5}
        & & & \textbf{SRNet} & \textbf{SiaStegNet} \\
        \midrule
        \citet{mao2024videostego} & 2.25 & \textbf{99.42} & --- & --- \\
        \midrule
        Psyduck ($d = 1$) & 32 & 96.23 & \textbf{50.0} & \textbf{50.0} \\
        Psyduck ($d = 2$) & 96 & 96.21 & 50.0 & 50.7 \\
        Psyduck ($d = 3$) & 96 & 97.95 & 49.8 & 51.0 \\
        \bottomrule
    \end{tabular}%
    }
    \caption{\small \textbf{Results on latent video diffusion experiments.} Detection rates for ~\citet{mao2024videostego} are omitted due to the lack of an open-source implementation or reported results.}
    \label{tab:video-performance}
\end{table}

As shown in Table~\ref{tab:video-performance}, PSyDUCK vastly outperforms the deep steganographic method proposed by~\citet{mao2024videostego}, particularly with respect to encoding capacity. PSyDUCK encodes nearly $14\times$ more information per frame without experiencing significant drops in transmission accuracy. Even at higher divergent step counts ($d = 2$ and $d = 3$), PSyDUCK maintains robust accuracy, with minimal degradation from 96.23\% at $d = 1$ to 97.95\% at $d = 3$. Importantly, PSyDUCK achieves these gains while maintaining low detection rates. In contrast, \citet{mao2024videostego} lacks sufficient open-source results for detection rates, but its relatively low encoding capacity suggests its effectiveness may be limited in higher-throughput settings. This demonstrates PSyDUCK’s ability to balance high encoding throughput and reliable recovery accuracy with minimal susceptibility to detection, even as the divergent step count and encoded data per frame increase.

\begin{figure}[t]
    \setlength{\tabcolsep}{2pt} 
    \renewcommand{\arraystretch}{-0.9} 
    \centering
    \resizebox{\linewidth}{!}{%
    \begin{tabular}{ccccc}
        
        \includegraphics[width=3cm]{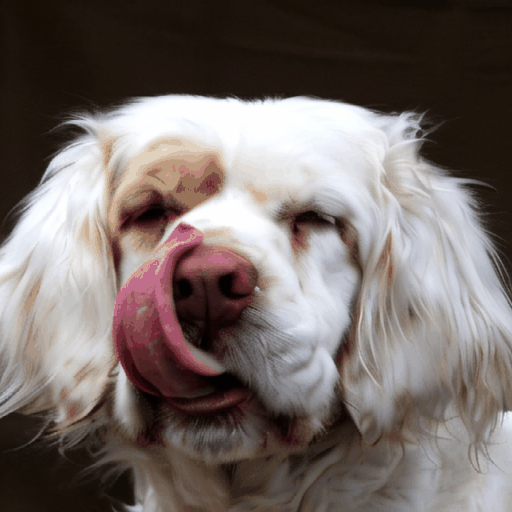} &
        \includegraphics[width=3cm]{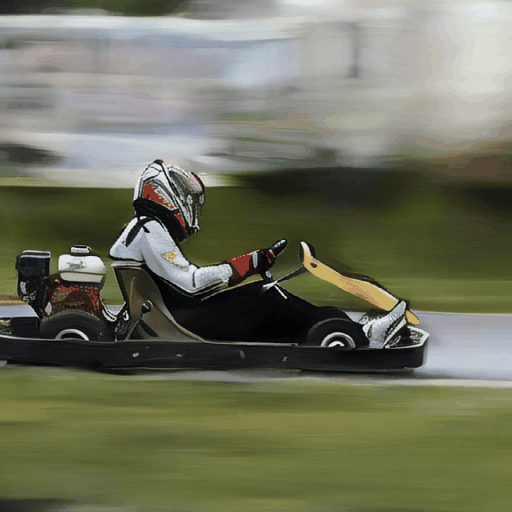} &
        \includegraphics[width=3cm]{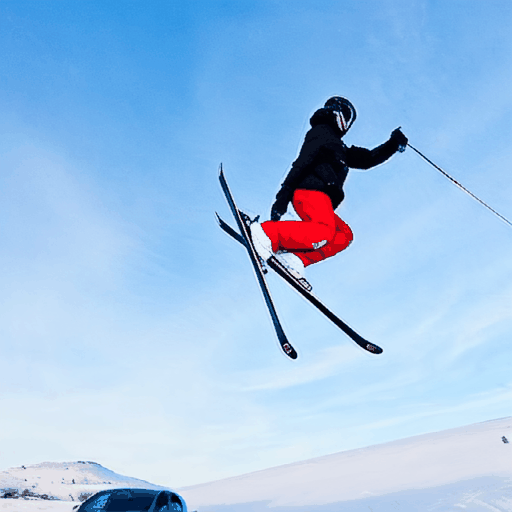} &
        \includegraphics[width=3cm]{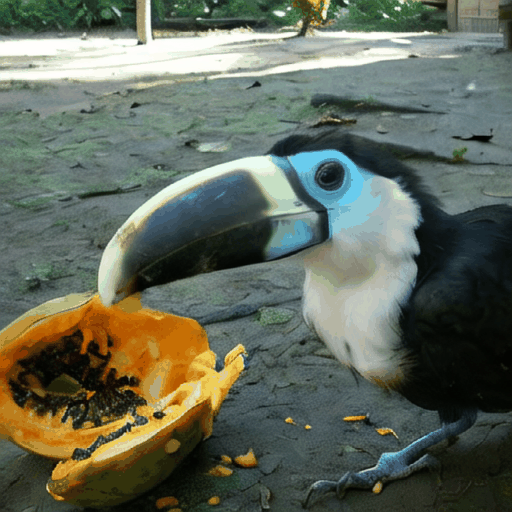} &
        \includegraphics[width=3cm]{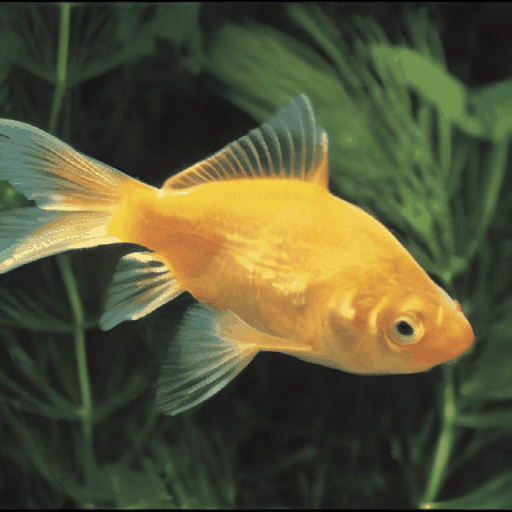}\\[6pt]
        
        \includegraphics[width=3cm]{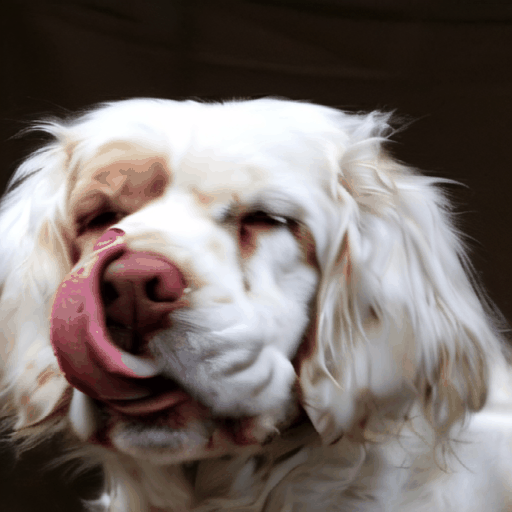} &
        \includegraphics[width=3cm]{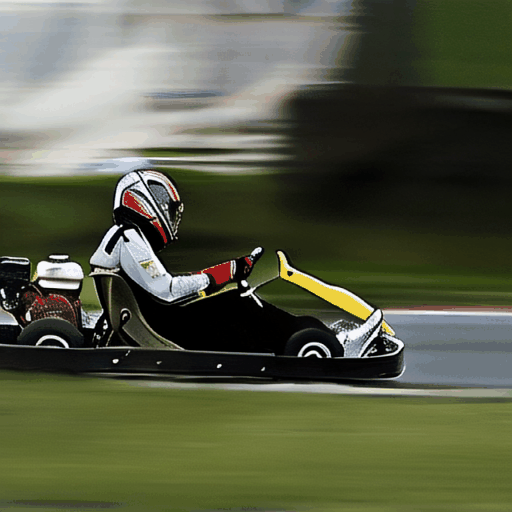} &
        \includegraphics[width=3cm]{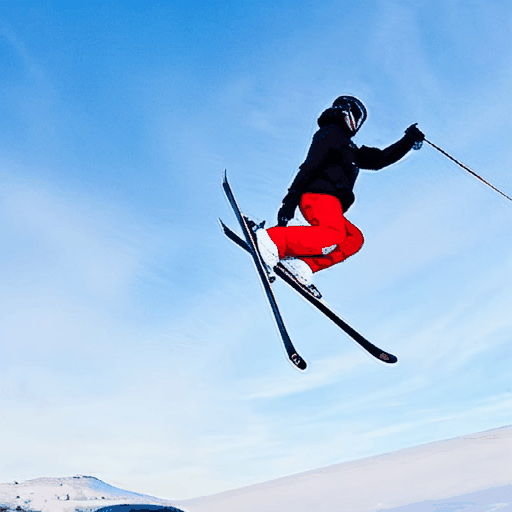} &
        \includegraphics[width=3cm]{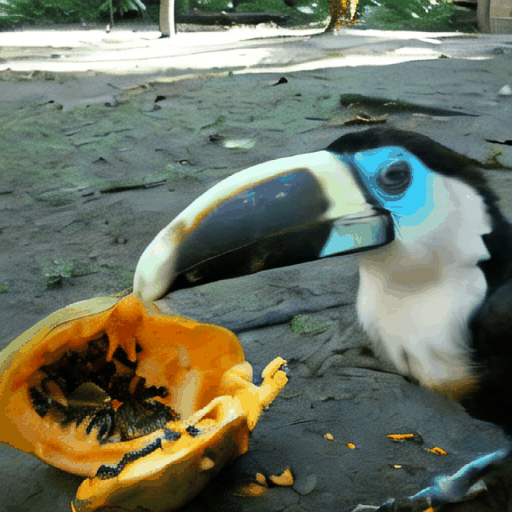} &
        \includegraphics[width=3cm]{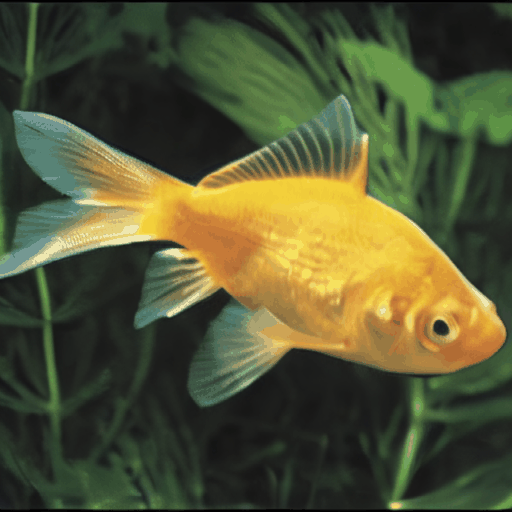}\\[6pt]
        
        \includegraphics[width=3cm]{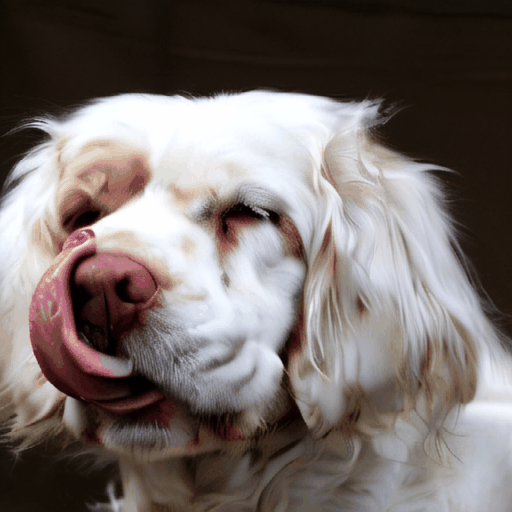} &
        \includegraphics[width=3cm]{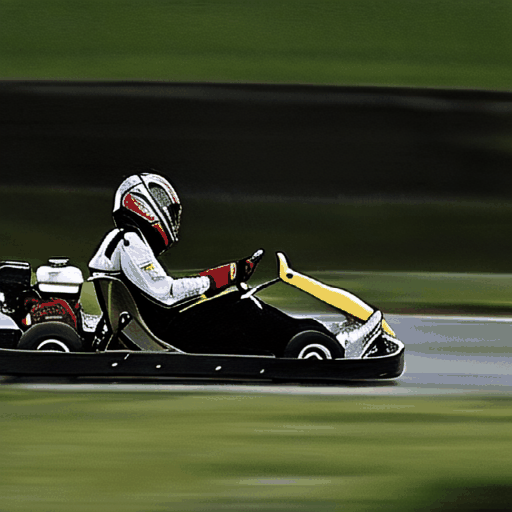} &
        \includegraphics[width=3cm]{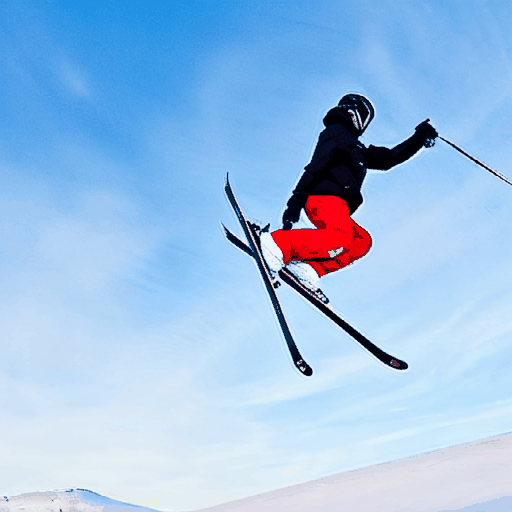} &
        \includegraphics[width=3cm]{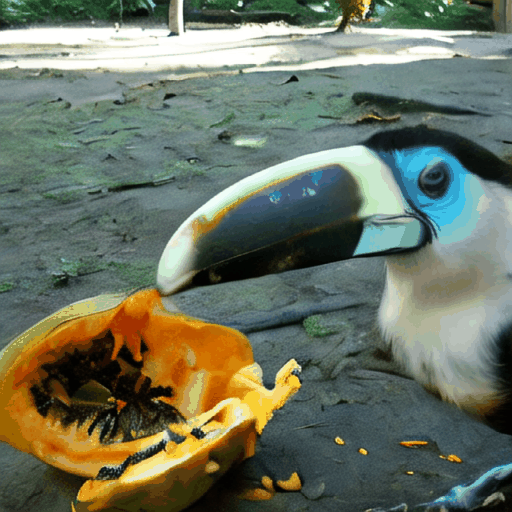} &
        \includegraphics[width=3cm]{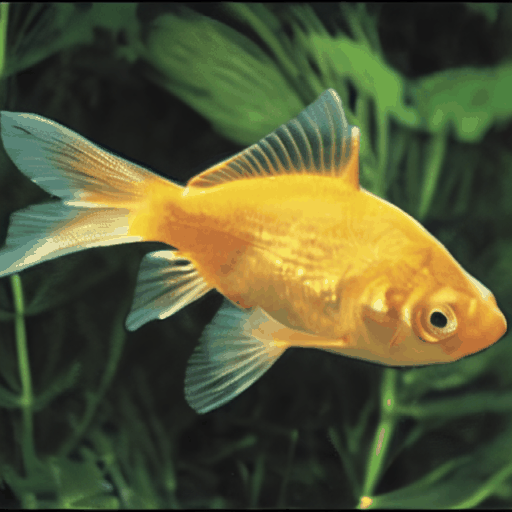}\\[6pt]
        
    \end{tabular}
    }
    \caption{\textbf{Qualitative analysis of SVD stegasamples.} We show three representative frames per video. The visual integrity of the steganographic samples remains high with no noticeable artifacts. The embedded message is content of the Zimmerman Telegraph, reproduced in Appendix~\ref{appendix:experimental-details}.}
    \label{fig:qualitative-video}
\end{figure}

To complement our quantitative results, Figure~\ref{fig:qualitative-video} presents a qualitative analysis of the generated videos, showcasing three representative frames per video. The visual integrity of the steganographic samples remains high, with no visible artifacts that would reveal the presence of hidden information.

\subsection{Ablation Studies}

To provide deeper insights into the behavior of PSyDUCK, we conduct ablation studies on the effects of model precision, the type of base image for generative video diffusion, and various post-processing techniques. 

\paragraph{Effect of Model Precision:} Table~\ref{tab:precision} highlights the impact of reducing model precision from 32-bit floating-point (FP32) to 16-bit floating-point (FP16). This is particularly sigificant in the context of steganographic encoding, since the model precision may impact the ability to recover the message. We use the \texttt{celeb} model for this experiment. From our results, lower precision reduces computational demands but introduces rounding errors that degrade decoding accuracy, particularly as the number of divergence steps $d$ increases. For instance, at $d = 3$, FP32 achieves significantly higher transmission accuracy than FP16, which underscores the importance of precision for tasks requiring higher divergence.

\begin{table}[t]
    \centering
    \resizebox{\columnwidth}{!}{%
    \begin{tabular}{c c c c}
        \toprule
        \textbf{Precision} & \textbf{Divergent Steps $d$} & \textbf{Bytes Encoded} & \textbf{Transmission Accuracy} \\
        \midrule
        \multirow{3}{*}{fp32}
        & $1$ & 512 & 75.00 \\
        & $2$ & 512 & 79.64 \\
        & $3$ & \textbf{512} & \textbf{85.10} \\
        \midrule
        \multirow{3}{*}{fp16}
        & $1$ & 512 & 74.21 \\
        & $2$ & 512 & 75.56 \\
        & $3$ & 512 & 80.47 \\
        \bottomrule
    \end{tabular}}
    \caption{\textbf{Effect of model precision on recovery with} \texttt{celeb}. The table highlights how \texttt{fp32} consistently outperforms \texttt{fp16}. This suggests that higher precision is critical to accurate recovery, particularly in scenarios with more divergent steps.}
    \label{tab:precision}
\end{table}

\begin{table}[t]
    \centering
    \resizebox{\columnwidth}{!}{%
    \begin{tabular}{c c c c}
        \toprule
        \shortstack{\textbf{Base} \\ \textbf{Image Type}} &  \raisebox{0.6em}{\shortstack{\textbf{Stegosystem}}} & \shortstack{\textbf{Bytes Encoded} \\ \textbf{Per Frame}} & \shortstack{\textbf{Transmission} \\ \textbf{Accuracy}} \\
        \midrule
        \multirow{3}{*}{Real} 
        & Psyduck ($d = 1$) & 96 & 96.23 \\
        & Psyduck ($d = 2$) & 96 & 96.21 \\
        & Psyduck ($d = 3$) & \textbf{96} & \textbf{97.95} \\
        \midrule
        \multirow{3}{*}{Synthetic} 
        & Psyduck ($d = 1$) & 96 & 64.78 \\
        & Psyduck ($d = 2$) & 96 & 69.20 \\
        & Psyduck ($d = 3$) & 96 & 72.80 \\
        \bottomrule
    \end{tabular}%
    }
    
    \caption{\textbf{Effect of real or synthetic base image.} We compare the transmission accuracy of latent video diffusion outputs conditioned on either a real or synthetic base image. The results show that using a real base image significantly improves transmission accuracy across all configurations.}
    \label{tab:vid-base-image}
\end{table}

\paragraph{Base Image Type:} Table~\ref{tab:vid-base-image} compares the performance of PSyDUCK when using real versus synthetic images as the base input. When initialized with real images, PSyDUCK consistently achieves significantly higher transmission accuracy across all values of $d$. In contrast, synthetic images, such as those generated by pre-trained diffusion models, result in lower accuracy, possibly due to inherent inconsistencies or variability within their latent representations. This observation suggests that PSyDUCK may be better suited for steganographic applications involving content derived from natural images, where the underlying representations are more stable and coherent. Figure~\ref{fig:weird-svd} further illustrates this issue qualitatively.

\section{Discussion}
\label{conclusion}

We introduce PSyDUCK, a novel, training-free steganographic framework that leverages controlled divergence for secure message embedding in latent diffusion models. By eliminating the need for retraining or private model sharing, PSyDUCK significantly simplifies practical deployment compared to existing deep-learning methods. Our approach is model-agnostic and dynamically adjusts embedding intensity to balance message accuracy and detection resistance.

We demonstrate PSyDUCK’s capabilities across both latent-based image and video diffusion models, showcasing its suitability for secure, high-capacity steganographic communication. Our experiments confirm PSyDUCK’s effectiveness with fixed-length video, laying the groundwork for future scalable and extended-duration steganographic use cases.

\begin{figure}[t]
    \centering
    \resizebox{\linewidth}{!}{%
    \begin{tabular}{ccccc}
        \includegraphics[]{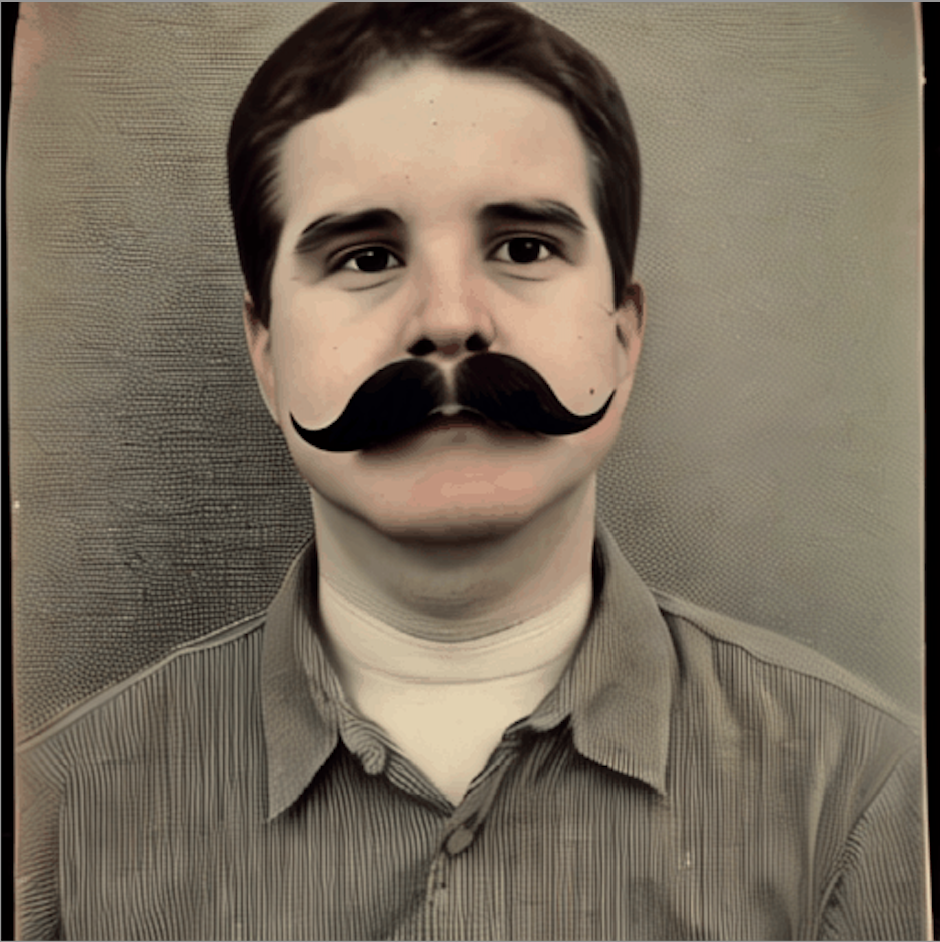} &
        \includegraphics[]{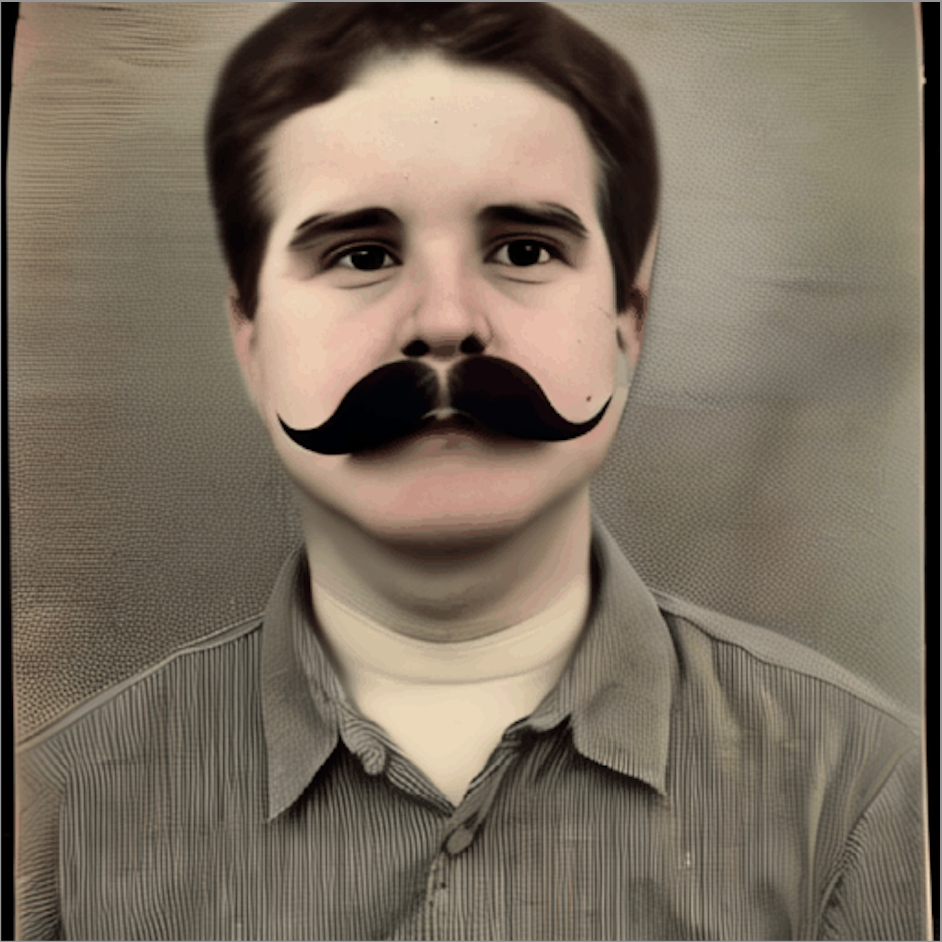} &
        \includegraphics[]{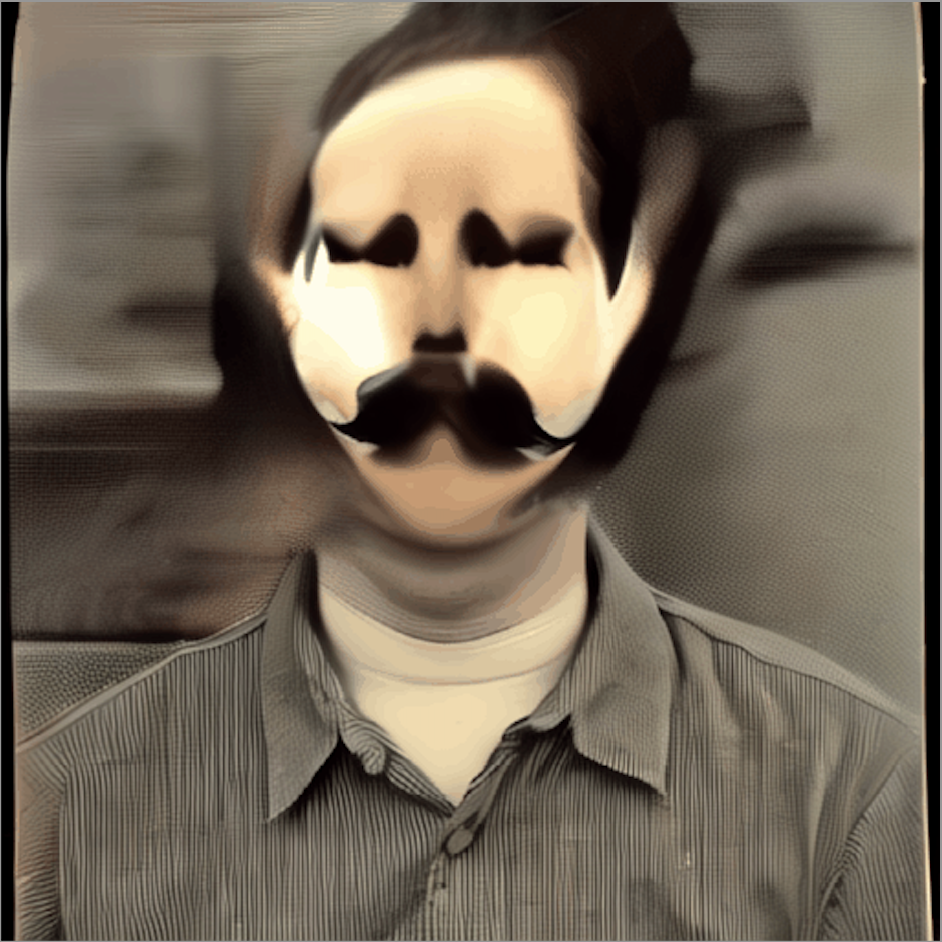} &
        \includegraphics[]{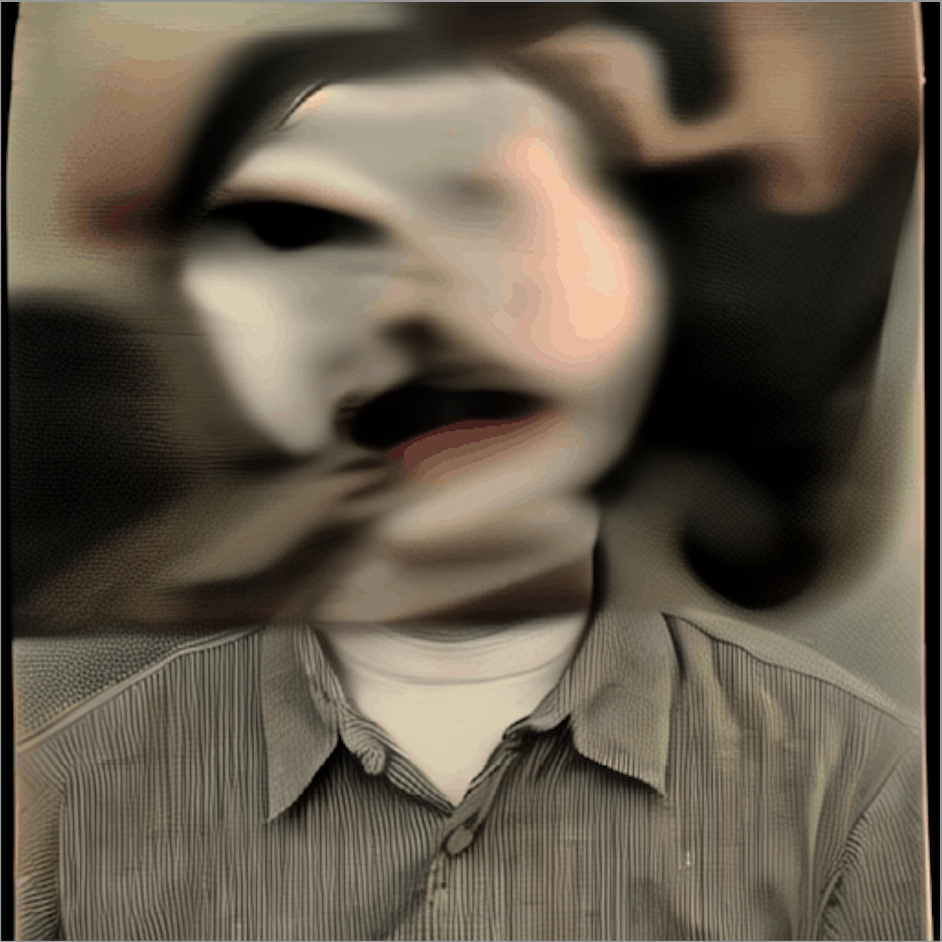} &
        \includegraphics[]{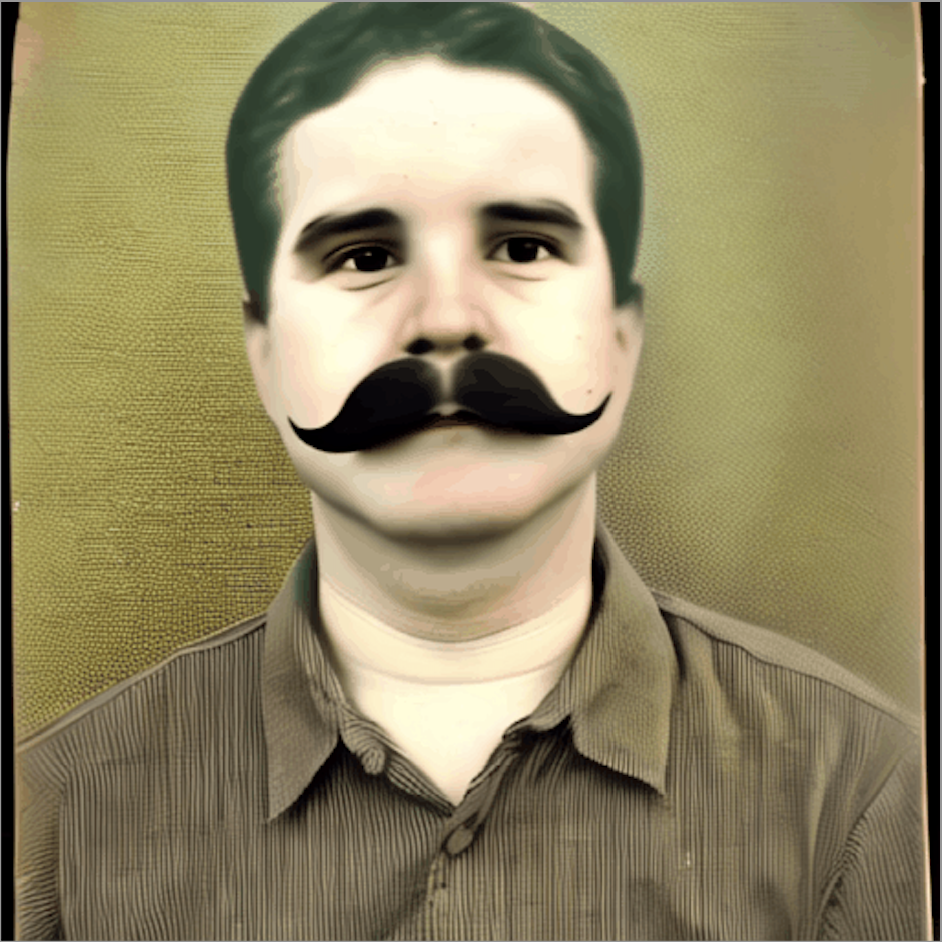}
    \end{tabular}}
    \caption{\textbf{Distorted output of SVD with synthetic base images.} We show five frames of a distorted video output by SVD when we condition on a synthetic base image.}
    \label{fig:weird-svd}
\end{figure}

A clear direction for future work points towards testing PSyDUCK's scalability under more diverse conditions and adversarial scenarios. While our current experiments are limited to fixed-length video models, exploration into FIFO-Diffusion or other unbounded-length video diffusion methods could potentially create steganographic channels of unlimited capacity. We would also like to explore the effect that steganography has on the temporal consistency of the video produced by latent diffusion models, as this could impact the perceptual integrity and security of embedded information over time. Another promising direction involves the development of more robust steganalyzers for video, as current methods for video steganalysis remain underdeveloped. Building on the findings in VANE-Bench~\citep{vane-bench}, we propose that large multimodal foundation models be explored for detecting steganographic anomalies in video content.

\small
\bibliographystyle{ieeenat_fullname}
\bibliography{refs}

\newpage
\appendix
\onecolumn
\section{Proofs}
\label{appendix:proofs}
\setlength{\abovedisplayskip}{5pt} 
\setlength{\belowdisplayskip}{5pt} 

\begin{lemma}
\label{lem:noise-bounded}
\textbf{(Single Step Bound)}\\
Suppose $\epsilon_\theta$ is bounded as above. Let $\bx_t$ be an arbitrary sample, $t$ a timestep, and $k$ a key. Then, for the update
\begin{equation}
  \bx_{t-1} = \mathrm{DiffusionStep}(\bx_t, t, k),
\end{equation}
there is a constant $\kappa>0$ such that
\begin{equation}
  \|\bx_t - \bx_{t-1}\| \le \kappa \,\sigma_t \quad\text{for all } t.
\end{equation}
Equivalently,
\begin{equation}
   \|\bx_t - \mathrm{DiffusionStep}(\bx_t, t, k)\| \in O(\sigma_t).
\end{equation}
\end{lemma}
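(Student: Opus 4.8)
The plan is to unfold the abstract $\mathsf{DiffusionStep}$ into the concrete first-order (DDPM/DDIM) update rule and bound the resulting displacement term by term. Writing the denoising step with key-generated noise $\epsilon$ in the standard form
\begin{equation}
  \bx_{t-1} = \frac{1}{\sqrt{\alpha_t}}\Bigl(\bx_t - \frac{\beta_t}{\sqrt{1-\bar{\alpha}_t}}\,\epsilon_\theta(\bx_t,t)\Bigr) + \sigma_t\,\epsilon,
\end{equation}
with $\alpha_t = 1-\beta_t$ and $\bar{\alpha}_t = \prod_{s\le t}\alpha_s$, I would first form the one-step difference and split it into three contributions:
\begin{equation}
  \bx_t - \bx_{t-1} = \Bigl(1-\tfrac{1}{\sqrt{\alpha_t}}\Bigr)\bx_t + \frac{\beta_t}{\sqrt{\alpha_t}\sqrt{1-\bar{\alpha}_t}}\,\epsilon_\theta(\bx_t,t) - \sigma_t\,\epsilon.
\end{equation}
The three pieces are a deterministic rescaling of $\bx_t$, a score-correction term proportional to the predicted noise, and the freshly injected stochastic noise.

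Second, I would bound each piece separately. The injected-noise term has norm exactly $\sigma_t\|\epsilon\|$, and since $\epsilon$ is a fixed standard-Gaussian draw whose norm concentrates near $\sqrt{n}$, this is $O(\sigma_t)$ after absorbing the dimension into the constant. For the score-correction term, the boundedness hypothesis $\|\epsilon_\theta(\bx,t)\|\le C$ immediately gives a bound $C\,\beta_t/(\sqrt{\alpha_t}\sqrt{1-\bar{\alpha}_t})$, which is $O(\beta_t)$ provided $\sqrt{\alpha_t}$ is bounded below and $\sqrt{1-\bar{\alpha}_t}$ stays away from zero. For the rescaling term, a first-order expansion of $(1-\beta_t)^{-1/2}$ shows $1-1/\sqrt{\alpha_t}=O(\beta_t)$, so its norm is $O(\beta_t\|\bx_t\|)$.

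Third, I would convert these $O(\beta_t)$ estimates into the desired $O(\sigma_t)$ bound. Under the usual variance choice $\sigma_t^2 \asymp \beta_t$ we have $\beta_t = O(\sigma_t^2)$, and since $\beta_t\le 1$ forces $\sigma_t\le 1$, it follows that $O(\sigma_t^2)\subseteq O(\sigma_t)$. Hence both the rescaling and score-correction terms are $O(\sigma_t^2)$, dominated by the injected noise, and the triangle inequality yields $\|\bx_t-\bx_{t-1}\|\le\kappa\,\sigma_t$ with $\kappa$ collecting the three constants. This simultaneously establishes the equivalent $O(\sigma_t)$ statement.

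The hard part will be making the $O(\beta_t)\subseteq O(\sigma_t)$ comparison uniform in $t$ and genuinely schedule-independent: it relies on $\sigma_t^2\asymp\beta_t$, on $\sqrt{1-\bar{\alpha}_t}$ being bounded below (which degrades as $t\to 0$ and is plausibly the reason the divergence window stops at $t=1$), and on an a priori norm bound for $\bx_t$ to control the rescaling term—a bound that does not follow from the $\epsilon_\theta$ hypothesis alone and would need to be assumed separately or derived from compactness of the latent space.
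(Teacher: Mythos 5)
Your proposal is correct in substance and follows the same top-level strategy as the paper (decompose the one-step update, bound each piece, finish with the triangle inequality), but it is a genuinely more concrete argument than what the paper actually writes. The paper never unfolds the sampler: it writes the step abstractly as $\bx_{t-1} = \bx_t + f(\bx_t, t, k) + g(t)\,\epsilon_\theta(\bx_t, t)$ with $g(t)$ ``typically proportional to $\sigma_t$'', and then simply asserts that a constant $\kappa$ exists with $\|f(\bx_t,t,k)\| + |g(t)|\,\|\epsilon_\theta(\bx_t,t)\| \le \kappa\,\sigma_t$ for all $\bx_t$. Your version opens that black box: the paper's $f$ silently contains both your rescaling term $\bigl(1-1/\sqrt{\alpha_t}\bigr)\bx_t$ and the key-generated injected noise $\sigma_t\,\epsilon$, and your three-way split together with the comparison $\beta_t = O(\sigma_t^2) \subseteq O(\sigma_t)$ is precisely the work the paper's single assertion conceals. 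What your route buys is that the necessary side conditions become visible (schedule comparability $\sigma_t^2 \asymp \beta_t$, $\sqrt{1-\bar{\alpha}_t}$ bounded away from zero on the divergence window, a norm bound on the noise draw); what the paper's route buys is sampler-generality across DDPM/DDIM-type updates, at the price of hiding those conditions inside the phrase ``suitable constant''.

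Your closing caveat is not a defect of your write-up but a genuine gap that afflicts the paper's proof equally: the asserted bound $\|f(\bx_t,t,k)\| \le \kappa\,\sigma_t$ ``for all $\bx_t$'' cannot hold for the DDPM update without an a priori bound on $\|\bx_t\|$, since $f$ contains the term $\bigl(1-1/\sqrt{\alpha_t}\bigr)\bx_t$ of norm $\Theta(\beta_t \|\bx_t\|)$, and the hypothesis $\|\epsilon_\theta\| \le C$ does not supply this. In practice the gap is harmless --- iterates of a trained model on a bounded data manifold remain in a compact set, and the lemma is only invoked for the late timesteps $t \le d+1$ --- but to make the statement literally true one should add boundedness of the iterates (or of the latent domain) as an explicit assumption, exactly as you propose. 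One further point you should sharpen: since $\epsilon$ is a Gaussian draw, $\|\epsilon\|$ is not deterministically bounded, so your appeal to norm concentration yields the bound only with high probability; for the fixed pseudorandom realizations generated from the keys in this scheme it holds deterministically, but that too is a hypothesis the paper leaves implicit.
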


\begin{proof}
By assumption, the noise estimator $\epsilon_\theta(\bx_t, t)$ remains within a bounded norm. In many diffusion sampling schemes (e.g., DDPM, DDIM), the update from $\bx_t$ to $\bx_{t-1}$ is of the form
\begin{equation}
  \bx_{t-1} = \bx_t + f(\bx_t, t, k) + g(t)\,\epsilon_\theta(\bx_t, t),
\end{equation}
where $g(t)$ is typically proportional to $\sigma_t$. Because $\|\epsilon_\theta(\bx_t, t)\|\le C$ for some constant $C$, one can choose a suitable constant $\kappa$ such that $\|f(\bx_t,t,k)\| + |g(t)|\,\|\epsilon_\theta(\bx_t,t)\| \le \kappa\,\sigma_t$ for all $\bx_t$. Hence,
\begin{equation}
  \|\bx_t - \bx_{t-1}\| = \bigl\|f(\bx_t, t, k) + g(t)\,\epsilon_\theta(\bx_t,t)\bigr\| \le \kappa \,\sigma_t.
\end{equation}
\end{proof}

\begin{lemma}
\label{lem:dstep-bounded}
\textbf{($d$-Step Bound)}\\
Fix a key $k$. Let $\bx_{t}$ be a sample at timestep $t$, and let $\bx_{t-d}$ be the sample obtained by applying $d$ iterations of $\mathrm{DiffusionStep}(\,\cdot\,, k)$. Under the same assumptions as Lemma~\ref{lem:noise-bounded}, we have
\begin{equation}
  \|\bx_t - \bx_{t-d}\| \in O\bigl(d\, \sigma_t\bigr).
\end{equation}
\end{lemma}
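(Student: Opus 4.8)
The plan is to reduce the $d$-step bound to a telescoping sum of single-step displacements, each controlled by Lemma~\ref{lem:noise-bounded}. Writing $\bx_{t-d}$ as the result of $d$ consecutive applications of $\mathrm{DiffusionStep}(\,\cdot\,, k)$ to $\bx_t$, I would first decompose the total displacement as
\[
  \bx_t - \bx_{t-d} \;=\; \sum_{s=0}^{d-1} \bigl(\bx_{t-s} - \bx_{t-s-1}\bigr),
\]
where each intermediate sample satisfies $\bx_{t-s-1} = \mathrm{DiffusionStep}(\bx_{t-s},\, t-s,\, k)$. The triangle inequality then yields
\[
  \|\bx_t - \bx_{t-d}\| \;\le\; \sum_{s=0}^{d-1} \|\bx_{t-s} - \bx_{t-s-1}\|.
\]

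Next I would invoke the single-step bound of Lemma~\ref{lem:noise-bounded} on each summand. Since that lemma guarantees $\|\bx_\tau - \bx_{\tau-1}\| \le \kappa\,\sigma_\tau$ at every timestep $\tau$, each term is at most $\kappa\,\sigma_{t-s}$, giving
\[
  \|\bx_t - \bx_{t-d}\| \;\le\; \kappa \sum_{s=0}^{d-1} \sigma_{t-s}.
\]

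The remaining and only nontrivial step is to collapse $\sum_{s=0}^{d-1}\sigma_{t-s}$ into a bound of the form $d\,\sigma_t$. Here I would use the standard property that the variance schedule $\sigma_\tau$ is non-decreasing in $\tau$, as is the case for the usual DDPM and DDIM noise schedules: smaller timesteps carry less noise, so $\sigma_{t-s} \le \sigma_t$ for every $0 \le s \le d-1$. Consequently $\sum_{s=0}^{d-1}\sigma_{t-s} \le d\,\sigma_t$, which delivers $\|\bx_t - \bx_{t-d}\| \le \kappa\,d\,\sigma_t \in O(d\,\sigma_t)$. I expect pinning down this monotonicity of the schedule to be the main obstacle, since the target bound is stated in terms of $\sigma_t$ alone rather than the individual $\sigma_{t-s}$. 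If one prefers not to assume strict monotonicity, it suffices to assume that the ratio $\sigma_{t-s}/\sigma_t$ is bounded uniformly over the $d$ steps, i.e. $\sigma_{t-s} = O(\sigma_t)$, which still collapses the sum to $O(d\,\sigma_t)$ and completes the argument.
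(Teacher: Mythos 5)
Your proof matches the paper's essentially step for step: the same telescoping decomposition, the same invocation of the single-step bound, and the same collapse of $\sum_{s=0}^{d-1}\sigma_{t-s}$ into $d\,\sigma_t$. The paper does not assume monotonicity of the schedule but rather exactly your fallback condition --- that $\sigma_{t-i} \le \kappa'\,\sigma_t$ for some constant $\kappa'$ (``the noise schedule does not change drastically'') --- so your proposal, including its concluding remark, is correct and complete.
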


\begin{proof}
We apply the telescoping sum:
\begin{equation}
  \|\bx_t - \bx_{t-d}\| \le \sum_{i=0}^{d-1}\|\bx_{t-i} - \bx_{t-i-1}\|.
\end{equation}
By Lemma~\ref{lem:noise-bounded}, each term satisfies $\|\bx_{t-i} - \bx_{t-i-1}\|\le \kappa\,\sigma_{t-i}$.  If $\sigma_{t-i}$ is comparable to $\sigma_t$ for $i=0,\dots,d-1$ (i.e., the noise schedule does not change drastically), then there is a constant $\kappa'$ such that $\sigma_{t-i}\le \kappa' \,\sigma_t$. Hence,
\begin{equation}
  \|\bx_t - \bx_{t-d}\| \le \sum_{i=0}^{d-1}\kappa\,\sigma_{t-i} \le \kappa\,\kappa' \sum_{i=0}^{d-1}\sigma_t = \kappa\,\kappa'\, d\,\sigma_t \in O\bigl(d\,\sigma_t\bigr).
\end{equation}
\end{proof}

\begin{lemma}
\label{lem:mix-bounded}
Consider the local function
\begin{equation}
  {\sf Mix}(\bb, X_1) := \text{(elementwise) selecting from a list of diverged samples } X_1.
\end{equation}
Let $\bx_{d+1}$ be the sample at time $d+1$ just prior to divergence. If each ``diverged path'' differs from $\bx_{d+1}$ by at most $O(d\,\sigma_{d+1})$, then
\begin{equation}
  \| {\sf Mix}(\bb, X_1) - \bx_{d+1}\| \in O(r\, d\, \sigma_{d+1}),
\end{equation}
where $r$ is the number of diverged samples being mixed.
\end{lemma}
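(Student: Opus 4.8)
The plan is to exploit the coordinatewise (local) structure of $\mathsf{Mix}$ so that the deviation of the mixed sample decomposes cleanly over the $r$ diverged paths, after which a single application of the triangle inequality delivers the factor of $r$. First I would make the hypothesis concrete: by Lemma~\ref{lem:dstep-bounded} applied from timestep $d+1$ down to timestep $1$ (which is exactly $d$ steps, since $d+1 \to d \to \cdots \to 1$), each diverged path satisfies $\|\bx_1^i - \bx_{d+1}\| \in O(d\,\sigma_{d+1})$ for every $i \in \{0,\dots,r-1\}$, so the stated assumption of the lemma is automatically met and I need not re-derive it.

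Next I would rewrite $\mathsf{Mix}$ as a sum of coordinate projections. Since each position $j$ is assigned to exactly one diverged sample via the bit $b_j$, the index set partitions into disjoint regions $\mathcal{I}_i = \{\, j : b_j = i \,\}$. Letting $P_i$ denote the orthogonal projection onto the coordinates in $\mathcal{I}_i$, we have $\sum_{i=0}^{r-1} P_i = \bI$, and by the definition of the local mixing operator $\mathsf{Mix}(\bb, X_1) = \sum_{i=0}^{r-1} P_i\,\bx_1^i$. Writing the reference vector as $\bx_{d+1} = \sum_{i=0}^{r-1} P_i\,\bx_{d+1}$ and subtracting yields the identity $\mathsf{Mix}(\bb, X_1) - \bx_{d+1} = \sum_{i=0}^{r-1} P_i(\bx_1^i - \bx_{d+1})$.

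From here the bound is routine. Applying the triangle inequality and using that coordinate projections are norm non-increasing, i.e.\ $\|P_i \bv\| \le \|\bv\|$, gives $\|\mathsf{Mix}(\bb, X_1) - \bx_{d+1}\| \le \sum_{i=0}^{r-1} \|\bx_1^i - \bx_{d+1}\|$, and substituting the per-path bound from the first step produces $r \cdot O(d\,\sigma_{d+1}) = O(r\,d\,\sigma_{d+1})$, as claimed.

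I expect the only real subtlety—rather than a genuine obstacle—to be the bookkeeping around the partition and the disjointness of the supports of the $P_i$. Indeed, that disjointness can be used to sharpen the argument: a Pythagorean estimate $\|\mathsf{Mix}(\bb,X_1)-\bx_{d+1}\|^2 = \sum_{i} \|P_i(\bx_1^i - \bx_{d+1})\|^2$ would instead yield the tighter $O(\sqrt{r}\,d\,\sigma_{d+1})$. Since the lemma asserts only the weaker $O(r\,d\,\sigma_{d+1})$, the plain triangle inequality suffices, and I would present that version for simplicity while possibly remarking that the constant can be improved.
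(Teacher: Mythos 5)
Your proof is correct and takes essentially the same route as the paper's: both decompose the mixed sample coordinatewise, bound the deviation of each chosen coordinate by the contributions of all $r$ paths, and reduce via the triangle inequality to $\sum_{i=0}^{r-1}\|\bx_1^i - \bx_{d+1}\| \le r\cdot O(d\,\sigma_{d+1})$ using Lemma~\ref{lem:dstep-bounded}; your projection identity $\mathsf{Mix}(\bb,X_1)-\bx_{d+1}=\sum_{i} P_i(\bx_1^i-\bx_{d+1})$ is simply a tidier (and, for the Euclidean norm, more rigorous) packaging of the paper's per-coordinate summation, which as written equates the norm with a sum of coordinate deviations in an $\ell_1$-style manner. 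Your closing observation that the disjoint supports admit a Pythagorean sharpening to $O(\sqrt{r}\,d\,\sigma_{d+1})$ is also valid and is a genuine refinement the paper does not exploit.
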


\begin{proof}
Let $X_1=[\bx^0_{1},\ldots,\bx^{r-1}_1]$ be the diverged samples at time $1$. For each individual coordinate $j$, ${\sf Mix}$ picks exactly one of $\{x^0_{1,j},\dots,x^{r-1}_{1,j}\}$. By nonnegativity of norms, we have
\begin{equation}
  \|{\sf Mix}(\bb,X_1)-\bx_{d+1}\| = \sum_{j=1}^{\ell}\bigl\| x^b_{1,j} - x_{d+1,j}\bigr\| \le \sum_{j=1}^{\ell}\sum_{i=0}^{r-1}\bigl\| x^i_{1,j} - x_{d+1,j}\bigr\|.
\end{equation}
Rearranging terms,
\begin{equation}
  \sum_{j=1}^{\ell}\sum_{i=0}^{r-1}\bigl\| x^i_{1,j} - x_{d+1,j}\bigr\| = \sum_{i=0}^{r-1} \sum_{j=1}^{\ell}\bigl\| x^i_{1,j} - x_{d+1,j}\bigr\| = \sum_{i=0}^{r-1}\|\bx^i_{1} - \bx_{d+1}\|.
\end{equation}
If each diverged sample $\bx^i_1$ is within $O(d\,\sigma_{d+1})$ of $\bx_{d+1}$ (as in Lemma~\ref{lem:dstep-bounded}), then summing over $i=0,\dots,r-1$ yields
\begin{equation}
  \sum_{i=0}^{r-1} O\bigl(d\,\sigma_{d+1}\bigr) = O\bigl(r\,d\,\sigma_{d+1}\bigr).
\end{equation}
Hence, the entire norm difference is $O(r\,d\,\sigma_{d+1})$.
\end{proof}

\paragraph{Extended Proof of Proposition~\ref{prop:err-sec-bound-formal}}

We now combine Lemmas~\ref{lem:noise-bounded}--\ref{lem:mix-bounded} to show $\|\alice{\bx_0} - \cover{\bx_0}\| \in O(d\,r\,\sigma_{d+1})$.

\begin{proof}[Proof of Proposition~\ref{prop:err-sec-bound-formal}]
Recall that $\alice{\bx_0}$ differs from $\cover{\bx_0}$ in three main ways:

\begin{enumerate}
    \item From $\alice{\bx_0}$ to $\mathsf{Mix}(\bb,X_1)$. By an argument analogous to Lemma~\ref{lem:noise-bounded}, we have
    \begin{equation}
      \|\alice{\bx_0} - \mathsf{Mix}(\bb,X_1)\| \le O\bigl(\sigma_1\bigr).
    \end{equation}

    \item From $\mathsf{Mix}(\bb,X_1)$ to $\alice{\bx_{d+1}}$. By Lemma~\ref{lem:mix-bounded}, the difference between $\mathsf{Mix}(\bb,X_1)$ and $\bx_{d+1}$ is $O(r\,d\,\sigma_{d+1})$. Hence,
    \begin{equation}
      \|\mathsf{Mix}(\bb,X_1) - \alice{\bx_{d+1}}\| \le O\bigl(r\,d\,\sigma_{d+1}\bigr).
    \end{equation}

    \item From $\alice{\bx_{d+1}}$ to $\cover{\bx_0}$. By Lemma~\ref{lem:dstep-bounded} (or a similar argument going from $t=d+1$ to $t=0$),
    \begin{equation}
      \|\alice{\bx_{d+1}} - \cover{\bx_0}\| \le O\bigl(d\,\sigma_{d+1}\bigr).
    \end{equation}
\end{enumerate}

Using the triangle inequality on the sum of these differences,
\begin{equation}
  \|\alice{\bx_0} - \cover{\bx_0}\| \le \|\alice{\bx_0} - \mathsf{Mix}(\bb,X_1)\| + \|\mathsf{Mix}(\bb,X_1) - \alice{\bx_{d+1}}\| + \|\alice{\bx_{d+1}} - \cover{\bx_0}\|.
\end{equation}
Hence,
\begin{equation}
  \|\alice{\bx_0} - \cover{\bx_0}\| \in O\bigl(\sigma_1\bigr) + O\bigl(r\,d\,\sigma_{d+1}\bigr) + O\bigl(d\,\sigma_{d+1}\bigr) \in O\bigl(d\,r\,\sigma_{d+1}\bigr).
\end{equation}
For large $d$, we typically consider $\sigma_1$ small or absorbed into a constant factor, so the final complexity is $O(d\,r\,\sigma_{d+1})$.
\end{proof}

\paragraph{Extended Proof of Proposition~\ref{prop:sto-d1-secure-formal}}

\begin{proof}
Suppose a ``cover'' sample is generated by standard diffusion:
\begin{equation}
  \cover{\bx_0} = \mathrm{DiffusionStep}(\bx_1,1,k_s) = \mathrm{Model}(\bx_1) + \sigma_1\,\epsilon.
\end{equation}
Going one step further, $\bx_1$ itself could be written similarly as $\bx_1 = \mathrm{Model}(\bx_2) + \sigma_2\,\tilde{\epsilon}$, etc.

Under Psyduck with $d=1$, we introduce a local mixture of noise $\mathsf{Mix}(\bb,\{\epsilon_0,\dots,\epsilon_{r-1}\})$. The sample at time $t=1$ is
\begin{equation}
  \alice{\bx_1} = \mathsf{Mix}(\bb,\alice{X_1}),
\end{equation}
where $\alice{X_1} = [\bx^0_1,\ldots,\bx^{r-1}_1]$, each $\bx^i_1$ being of the form $\mathrm{Model}(\bx_2) + \sigma_2\,\epsilon_i$. By linearity of $\mathsf{Mix}$,
\begin{equation}
  \mathsf{Mix}(\bb,\alice{X_1}) = \mathrm{Model}(\bx_2) + \sigma_2 \,\mathsf{Mix}\bigl(\bb,[\epsilon_0,\dots,\epsilon_{r-1}]\bigr).
\end{equation}
Thus,
\begin{equation}
  \alice{\bx_0} = \mathrm{DiffusionStep}(\alice{\bx_1},1,k_s) = \mathrm{Model}\bigl(\alice{\bx_1}\bigr) + \sigma_1\,\epsilon.
\end{equation}
But here, $\alice{\bx_1}$ contains the mixture of noise.

Compare:
\begin{equation}
  \cover{\bx_0} = \mathrm{Model}(\bx_1) + \sigma_1\,\epsilon, \quad \alice{\bx_0} = \mathrm{Model}\bigl(\mathsf{Mix}(\bb,\alice{X_1})\bigr) + \sigma_1\,\epsilon.
\end{equation}
As long as $\mathrm{Model}$ is a deterministic function and $\sigma_1\,\epsilon$ appears as pure noise, the adversary's only handle is whether $\mathsf{Mix}(\bb,\{\epsilon_0,\dots,\epsilon_{r-1}\})$ is distinguishable from fresh noise $\epsilon$. If these distributions are indistinguishable (e.g., all are i.i.d.\ Gaussian and $\mathsf{Mix}$ preserves randomness in a black-box manner), then the adversary gains no information about the presence of steganography. Consequently, the scheme is secure under the assumption of indistinguishable noise.
\end{proof}

\section{Algorithms} \label{appendix:algorithms}

{
\newcommand{\highlight}[4]{
    \tikz[overlay, remember picture]{
        \fill[#2!50, opacity=0.3] 
        ([xshift={#3[0]}, yshift={#3[1]}]pic cs:#1-start) rectangle ([xshift={#4[0]}, yshift={#4[1]}]pic cs:#1-end);
    }
}
\newcommand{\psyducklabel}[2]{
    {\pgfsetfillopacity{0.15}\colorbox{#1}{\pgfsetfillopacity{1}\textcolor{black}{#2}}\pgfsetfillopacity{1}}
}

\begin{figure}[t]
\begin{minipage}[t]{0.48\textwidth}
\begin{algorithm}[H]
    \caption{PSyDUCK Encode}
    \label{alg:psyduck_encode}
    \begin{algorithmic}[1]
        \REQUIRE $k_s, \{k_{i \in 0, ..., r}\}, d, \bb$
        \tikzmark{enc1-start}
        \STATE $\bx_T \gets {\sf Preprocess}(k_s)$
        \STATE $t \gets T$
        \tikzmark{enc1-end}

        \tikzmark{enc2-start}
        \WHILE{$t > d+1$} 
            \STATE $\bx_{t-1} \gets {\sf DiffusionStep}(\bx_t, t, k_s)$
            \STATE $t \gets t - 1$
        \ENDWHILE
        \tikzmark{enc2-end}

        \tikzmark{enc3-start}
        \FOR{$i = 0, ..., r-1$}
            \STATE $\bx^i_{t} \gets \bx_t$
        \ENDFOR
        \tikzmark{enc3-end}

        \tikzmark{enc4-start}
        \WHILE{$t > 1$} 
            \FOR{$i = 0, ..., r-1$}
                \STATE $\bx^i_{t-1} \gets {\sf DiffusionStep}(\bx^i_t, t, k_i)$
            \ENDFOR
            \STATE $t \gets t - 1$
        \ENDWHILE
        \tikzmark{enc4-end}

        \STATE $\bx_1 \gets {\sf Mix}\bigl(\bb, [\bx_1^0, ..., \bx_1^{r-1}]\bigr)$ 
        \STATE $\alice{\bx_0} \gets {\sf DiffusionStep}(\bx_1, 1, k_s)$
        \STATE $\alice{\bx_0} \gets {\sf Postprocess}(\alice{\bx_0}, k_s)$
    \end{algorithmic}
\end{algorithm}
\end{minipage}
\hfill
\begin{minipage}[t]{0.48\textwidth}
\begin{algorithm}[H]
    \caption{PSyDUCK Decode}
    \label{alg:psyduck_decode}
    \begin{algorithmic}[1]
        \REQUIRE $k_s, \{k_{i \in 0, ..., r}\}, d, \alice{\bx_0}$
        \tikzmark{dec1-start}
        \STATE $\bx_T \gets {\sf Preprocess}(k_s)$
        \STATE $t \gets T$
        \tikzmark{dec1-end}

        \tikzmark{dec2-start}
        \WHILE{$t > d+1$} 
            \STATE $\bx_{t-1} \gets {\sf DiffusionStep}(\bx_t, t, k_s)$
            \STATE $t \gets t - 1$
        \ENDWHILE
        \tikzmark{dec2-end}

        \tikzmark{dec3-start}
        \FOR{$i = 0, ..., r-1$} 
            \STATE $\bx^i_{t} \gets \bx_{t}$
        \ENDFOR
        \tikzmark{dec3-end}

        \tikzmark{dec4-start}
        \WHILE{$t > 0$} 
            \FOR{$i = 0, ..., r-1$}
                \STATE $\bx^i_{t-1} \gets {\sf DiffusionStep}(\bx^i_t, t, k_i)$
            \ENDFOR
            \STATE $t \gets t - 1$
        \ENDWHILE
        \tikzmark{dec4-end}

        \FOR{$i = 0, ..., r$} 
            \STATE $\mathbf{d}_i \gets |\bx_0^i - \alice{\bx_0}|$
        \ENDFOR
        \STATE $\bbh \gets \bzero^{\ell}$
        
        \FOR{$j = 1, ..., \ell$} 
            \STATE $\bbh[j] \gets \argmin_{i}\{\mathbf{d}_i[j]\}$
        \ENDFOR
    \end{algorithmic}
\end{algorithm}
\end{minipage}

\highlight{enc1}{red}{{-12.25em,-0.3em}}{{12.25em,-0.3em}}
\highlight{dec1}{red}{{-13.5em,-0.3em}}{{12.5em,-0.3em}}

\highlight{enc2}{orange}{{-1.75em,-0.3em}}{{11.25em,-0.3em}}
\highlight{dec2}{orange}{{-1.75em,-0.3em}}{{11.25em,-0.3em}}

\highlight{enc3}{yellow}{{-1.75em,-0.3em}}{{12.5em,-0.3em}}
\highlight{dec3}{yellow}{{-1.75em,-0.3em}}{{12.5em,-0.3em}}

\highlight{enc4}{green}{{-1.75em,-0.3em}}{{11.25em,-0.3em}}
\highlight{dec4}{cyan}{{-1.75em,-0.3em}}{{11.25em,-0.3em}}

\highlight{dec5}{blue}{{-1.75em,-0.3em}}{{12.5em,-0.3em}}

\vspace{-45pt}
\psyducklabel{red}{Preprocess (i.e. VAE encoding, initial noise generation, etc.) using $k_s$.} \\
\psyducklabel{orange}{Denoise along original trajectory using $k_s$.} \\
\psyducklabel{yellow}{Make $r$ copies of $\bx_{d+1}$ to use for diverging.} \\
\psyducklabel{green}{Alice diverges until final denoising step. \hspace{1.65cm}(i.e. for $t = d, d-1, ..., 1$)} \\
\psyducklabel{cyan}{Bob diverges until all samples are fully denoised. (i.e. for $t = d, d-1, ..., 0$)} \\

\end{figure}
}
    
\pagebreak
\section{Experimental Details}
\label{appendix:experimental-details}

\subsection{Prompts \& Messages}

We use the following prompts to generate the latent image diffusion images:
\begin{itemize}
    \item A man with a mustache.
    \item A photo of a cat.
    \item Eiffel Tower under the blue sky.
    \item Sydney Opera House.
    \item Leaning Tower of Pisa.
    \item Young girl with blond hair.
    \item A cute rabbit.
    \item Tomatoes hanging on a tree.
    \item A multi-layered sandwich.
    \item A futuristic cityscape at dusk, with neon lights reflecting on wet streets, flying cars, and towering skyscrapers.
    \item A serene fantasy forest with glowing plants, a crystal-clear stream, and mythical creatures like fairies and unicorns.
    \item A steampunk airship soaring above a cloud-covered landscape, with intricate gears and brass details, in a sunset sky.
    \item An astronaut exploring an alien planet with colorful rock formations, strange plants, and distant galaxies visible in the sky.
    \item A whimsical underwater scene with a variety of vibrant marine life, coral reefs, and an ancient sunken ship.
    \item A cozy, rustic cabin in the middle of a snowy mountain landscape, with warm light glowing from the windows and smoke rising from the chimney.
    \item A bustling medieval marketplace with stalls selling exotic goods, people in period clothing, and a grand castle in the background.
    \item A surreal dreamscape with floating islands, cascading waterfalls, and a giant moon hanging low in the sky.
    \item A detailed cyberpunk street scene with diverse characters, holographic advertisements, and high-tech gadgets amidst a gritty urban environment.
    \item A magical library with towering bookshelves, enchanted floating books, and a grand chandelier, all set in a grand, gothic-style room.
\end{itemize}

For experiments with \texttt{StableVideoDiffusion}, we condition the video generation on an input image. We randomly sampled images from ImageNet for all our video tests, cropping them to $512\times512$ when necessary~\cite{imagenet}. 

The following is the text of the Zimmerman Telegram, which we embed in Figure~\ref{fig:qualitative-video}:

\begin{displayquote}
We intend to begin on the first of February unrestricted submarine warfare. We shall endeavor in spite of this to keep the United States of America neutral. In the event of this not succeeding, we make Mexico a proposal of alliance on the following basis: make war together, make peace together, generous financial support, and an understanding on our part that Mexico is to reconquer the lost territory in Texas, New Mexico, and Arizona. The settlement in detail is left to you. You will inform the President of the above most secretly as soon as the outbreak of war with the United States of America is certain and add the suggestion that he should, on his own initiative, invite Japan to immediate adherence and at the same time mediate between Japan and ourselves. Please call the President's attention to the fact that the ruthless employment of our submarines now offers the prospect of compelling England in a few months to make peace.
\end{displayquote}

\end{document}